\def\eqref#1{equation~\ref{#1}}
\def\1{\bm{1}}
\DeclareMathAlphabet{\mathsfit}{\encodingdefault}{\sfdefault}{m}{sl}
\SetMathAlphabet{\mathsfit}{bold}{\encodingdefault}{\sfdefault}{bx}{n}
\def\gD{{\mathcal{D}}}
\def\gL{{\mathcal{L}}}
\newcommand{\R}{\mathbb{R}}
\newcommand{\sigmoid}{\sigma}
\newcommand{\KL}{D_{\mathrm{KL}}}
\crefname{equation}{Eq.}{Eqs.}
\def\methodabbrv{$f$-PO\xspace}
\newcommand{\RKLDiv}{\mathbb{D}_{\textup{RKL}}}
\newcommand{\KLDiv}{\mathbb{D}_{\textup{KL}}}
\newcommand{\FDiv}{\mathbb{D}_f}
\newcommand{\expect}{\mathbb{E}}
\newcommand{\pref}{\pi_{\textnormal{ref}}}
\newcommand{\data}{\mathcal{D}}
\newtheorem{theorem}{Theorem}
\newtheorem{lemma}{Lemma}
\begin{document}

%
\runningtitle{$f$-PO: Generalizing Preference Optimization with $f$-divergence Minimization}

%

\twocolumn[

\aistatstitle{$f$-PO: Generalizing Preference Optimization with\\ $f$-divergence Minimization}

\aistatsauthor{Jiaqi Han$^*$, Mingjian Jiang$^*$,  Yuxuan Song, Stefano Ermon, Minkai Xu$^{*\dagger}$}

\centering{$^*$Equal contribution $^\dagger$Corresponding author}

\centering{\texttt{\{jiaqihan,jiangm,minkai\}@cs.stanford.edu}}

\aistatsaddress{Computer Science, Stanford University} ]

\begin{abstract}
\looseness=-1
Preference optimization has made significant progress recently, with numerous methods developed to align language models with human preferences. This paper introduces $f$-divergence Preference Optimization (\methodabbrv), a novel framework that generalizes and extends existing approaches. \methodabbrv minimizes $f$-divergences between the optimized policy and the optimal policy, encompassing a broad family of alignment methods using various divergences. Our approach unifies previous algorithms like DPO and EXO, while offering new variants through different choices of $f$-divergences. We provide theoretical analysis of \methodabbrv's properties and conduct extensive experiments on state-of-the-art language models using benchmark datasets. Results demonstrate $f$-PO's effectiveness across various tasks, achieving superior performance compared to existing methods on popular benchmarks such as AlpacaEval 2, Arena-Hard, MT-Bench, and Open LLM Leaderboard v2. Additionally, we present ablation studies exploring the impact of different $f$-divergences, offering insights into the trade-offs between regularization and performance in offline preference optimization. Our work contributes both practical algorithms and theoretical understanding to the field of language model alignment. Code is available at \url{https://github.com/MinkaiXu/fPO}.
\end{abstract}

\section{Introduction}

Despite the impressive emerging ability of modern large language models (LLMs) by weak-supervised learning, further aligning these models with human feedback is still crucial~\citep{leike2018scalable}, ensuring the LLMs are more helpful, honest~\citep{askell2021general}, harmless~\citep{bai2022traininghelpfulharmlessassistant}, faithful~\citep{ji2023survey}, and unbiased~\citep{bender2021dangers}. Reinforcement learning from human feedback (RLHF)~\citep{christiano2017deep,stiennon2020learning,ouyang2022training} is the canonical
paradigm for fine-tuning LLMs towards effective alignment. It typically consists of several separate procedures, including training a reward model to capture the human values with well-labeled preference datasets, and optimizing the LLM as the policy model to maximize the reward. While these approaches have achieved remarkable results, they present notable optimization challenges due to the multi-stage process.

Lately, to mitigate this training instability and complexity, simpler offline methods such as Direct Prefernece Optimization (DPO)~\citep{rafailov2023direct} have been attracting increasing attention~\citep{Azar2023AGT,Zhao2023SLiCHFSL}. Instead of a multi-stage RL pipeline, these methods propose to directly align LLMs with pairwise comparison datasets, which avoids the additional efforts for training the reward model. In DPO, the reward function is instead parameterized as the log density ratio between the policy model and a fixed reference model (typically the one after supervised fine-tuning). DPO enjoys efficient and stable optimization by training the log density ratio with a binary classification objective, following the Bradley-Terry model~\citep{bradley1952rank}. Afterward, numerous methods have been proposed to optimize the log density ratio under various different convex functions $f$ such as IPO~\citep{Azar2023AGT} and EXO~\citep{ji2024towards}, and GPO~\citep{tang2024generalized} further provides a unified view of the existing algorithms. However, the parameterization of the reward function to density ratio is not always guaranteed when the policy is suboptimal, and the specific choice of convex function $f$ lacks theoretical guidance and remains heuristic. 

In this paper, we argue that a more principled framework can be developed from a general distribution-matching perspective, which can also provide direct insight in choosing the convex function $f$ based on desirable distribution behavior. To this end, we propose \methodabbrv, a general and principled family of preference optimization algorithms. The key innovation of our framework is to formulate preference optimization as a distribution-matching problem between the policy model and the underlying optimal model via $f$-divergence minimization. Such formulation induces a general alignment objective by wrapping the density ratio of two policies in various $f$ functions satisfying certain requirements. Importantly, our framework recovers DPO and EXO with reverse and forward KL divergences, and generalizes to other cases with arbitrary $f$-divergences. More concretely, we make the following contributions over state-of-the-art preference optimization algorithms:
\begin{itemize}
    \item We derive generalized offline alignment objectives from a distribution-matching perspective using $f$-divergences, and cover several major previous methods as special cases under the pairwise comparison datasets setting.
    \item We introduce new algorithms not yet in the current literature by using novel $f$-divergences, such as $\alpha$-divergence and Jeffrey's divergence.
    \item We provide principled ways to further combine \methodabbrv with other state-of-the-art preference optimization methods such as SimPO~\citep{meng2024simpo}, by alternative the inner density ratio with other approximations. 
    \item We conduct detailed analysis with different $f$-divergences, and observe reasonable performance trade-offs across \methodabbrv variants following corresponding divergence characteristics.
\end{itemize}
We conduct comprehensive experiments to compare \methodabbrv with competitive existing alignment methods. Notably, when using $\alpha$-divergence, our approach can consistently achieve superior or comparable performance against state-of-the-art algorithms on standard benchmarks, with up to $45.3\%$ length-controlled winning rate on AlpacaEval 2 by finetuning on Llama3-8B-instruct. The results demonstrate that our approach leads to substantial improvements with the principled $f$-divergence design space.

\section{Related Work}

Aligning pretrained large language models with human preferences for high-quality responses is vital in natural language generation. RLHF \citep{christiano2017deep,ouyang2022training} has emerged as a common solution to address this problem by utilizing popular RL methods such as Proximal Policy Optimization (PPO) \citep{schulman2017proximalpolicyoptimizationalgorithms}. However, this approach faces limitations due to training instability and the complexity introduced by its two-stage pipeline.
DPO~\citep{rafailov2023direct} overcomes this limitation by defining the preference loss as a function of the policy directly given the pairwise preference data.  Followup works have extended this algorithm to utilize multiple ranked responses instead of pairwise preference data \citep{yuan2023rrhfrankresponsesalign, liu2024lipolistwisepreferenceoptimization, song2024preferencerankingoptimizationhuman}, and avoid dependence on the reference model, effectively merging the instruction tuning phase and preference optimization phase \citep{Hong2024ORPOMP, meng2024simpo}. 
GPO~\citep{tang2024generalized} unifies several DPO variants~\citep{Azar2023AGT,liu2024statisticalrejectionsamplingimproves} into a family of algorithms and provides an empirical analysis of the performance trade-off, but leave the theoretical analysis in a heuristic manner. Notably, \citet{wang2023beyond} also introduces $f$-divergence into the alignment problem, but the divergences are only applied to the regularization term of the RL formulation.
Furthermore, all the above works are built on the reparameterization of reward function by policy likelihood, which cannot be guaranteed in practice~\citep{ji2024towards}.
In contrast, our method unifies the algorithm based on the theoretical framework of statistical distribution divergence~\citep{csiszar2004information,liese2006divergences}, which offers a principled to choose the specific algorithm based on divergence characteristics.
EXO~\citep{ji2024towards} proposes learning the policy by exactly optimizing the RLHF objective that minimizes the reverse KL divergence against the optimal policy, which is a special case of our proposed framework.~\citet{xiao2024leverage} leverages demonstration data for alignment using self-imitation learning. In addition, numerous offline preference optimization works propose different training objectives to emphasize various behaviors~\citep{yuan2023rrhf, Zhao2023SLiCHFSL, xu2024contrastive, Ethayarajh2024KTOMA}. 


\begin{figure*}[!t]
    \centering
    \begin{subfigure}[b]{0.245\textwidth}
        \includegraphics[width=\textwidth]{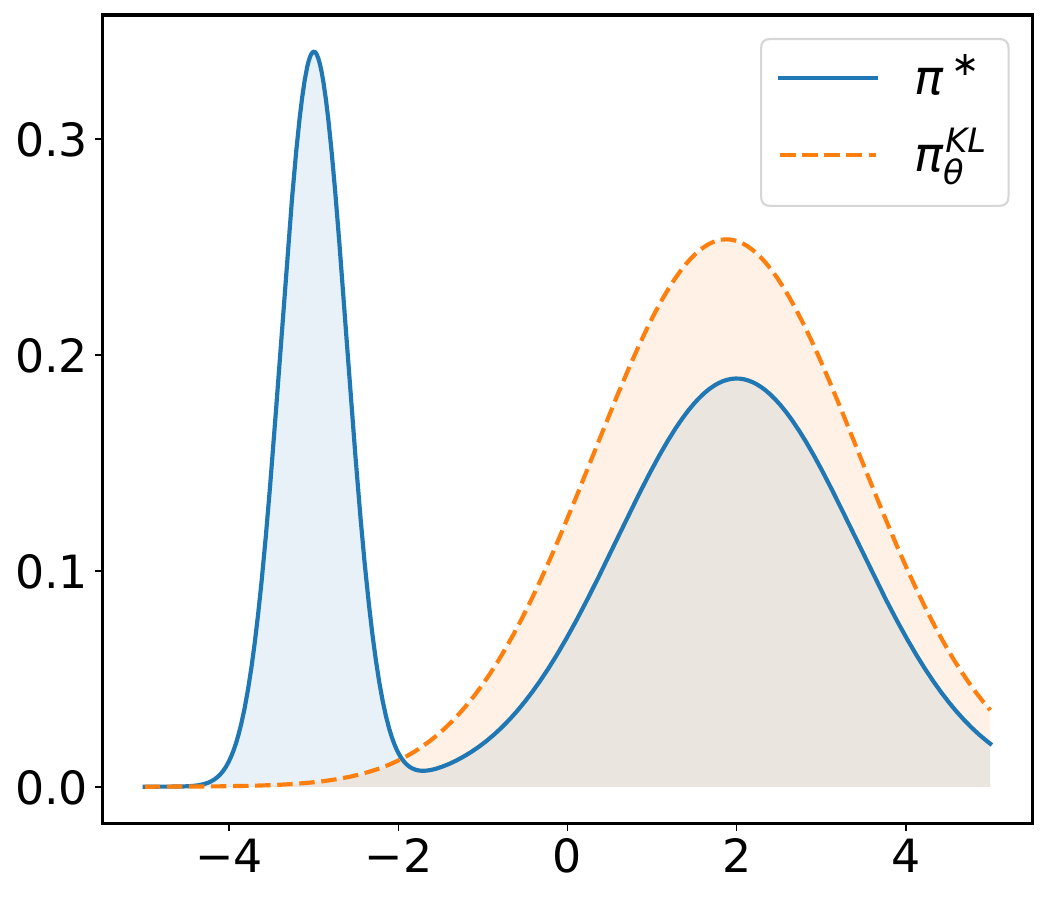}
        \caption{KL Divergence}
    \end{subfigure}
    \begin{subfigure}[b]{0.245\textwidth}
        \includegraphics[width=\textwidth]{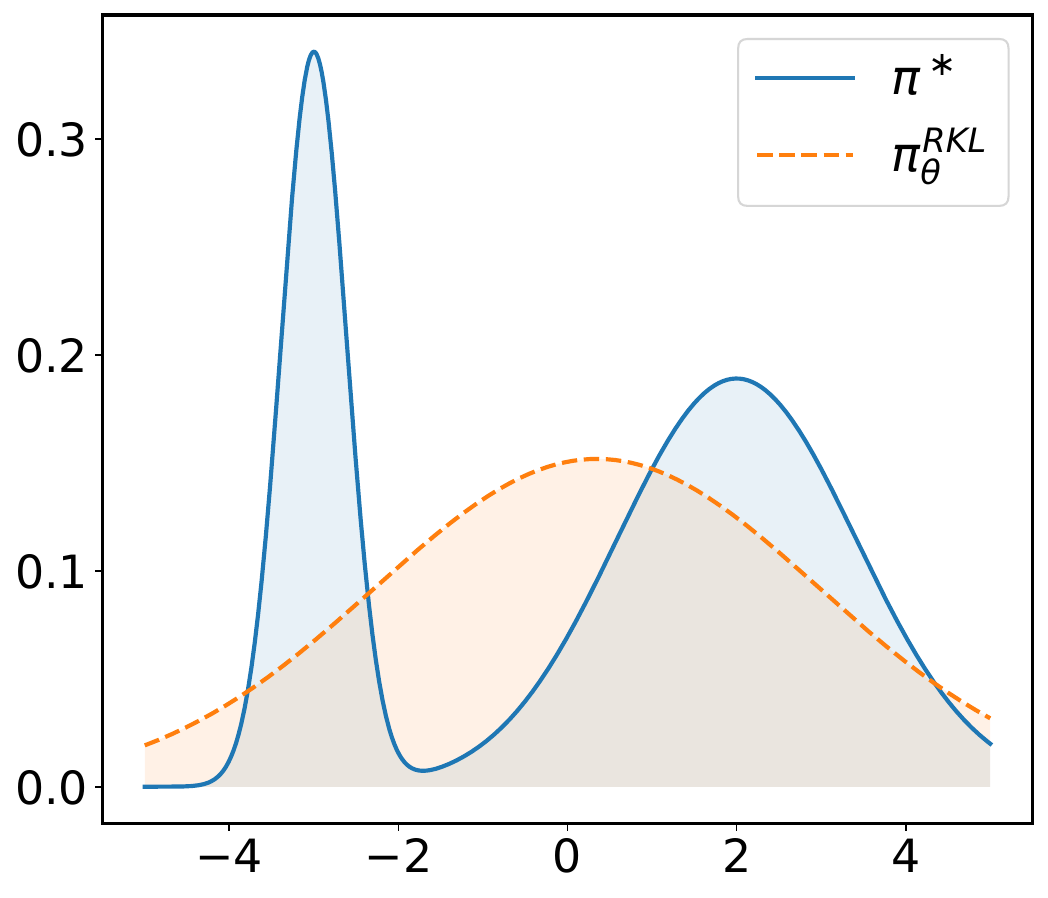}
        \caption{Reverse KL Divergence}
    \end{subfigure}
    \begin{subfigure}[b]{0.245\textwidth}
        \includegraphics[width=\textwidth]{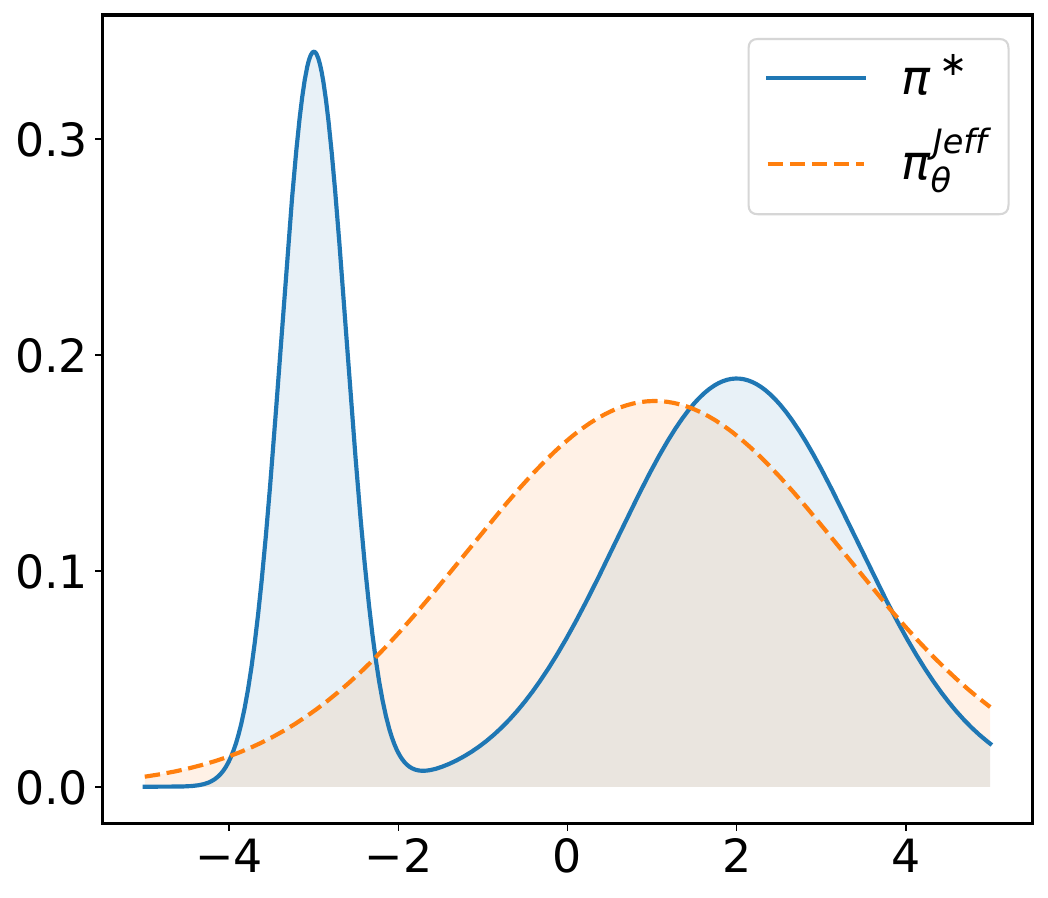}
        \caption{Jeffrey's Divergence}
    \end{subfigure}
    \begin{subfigure}[b]{0.245\textwidth}
        \includegraphics[width=\textwidth]{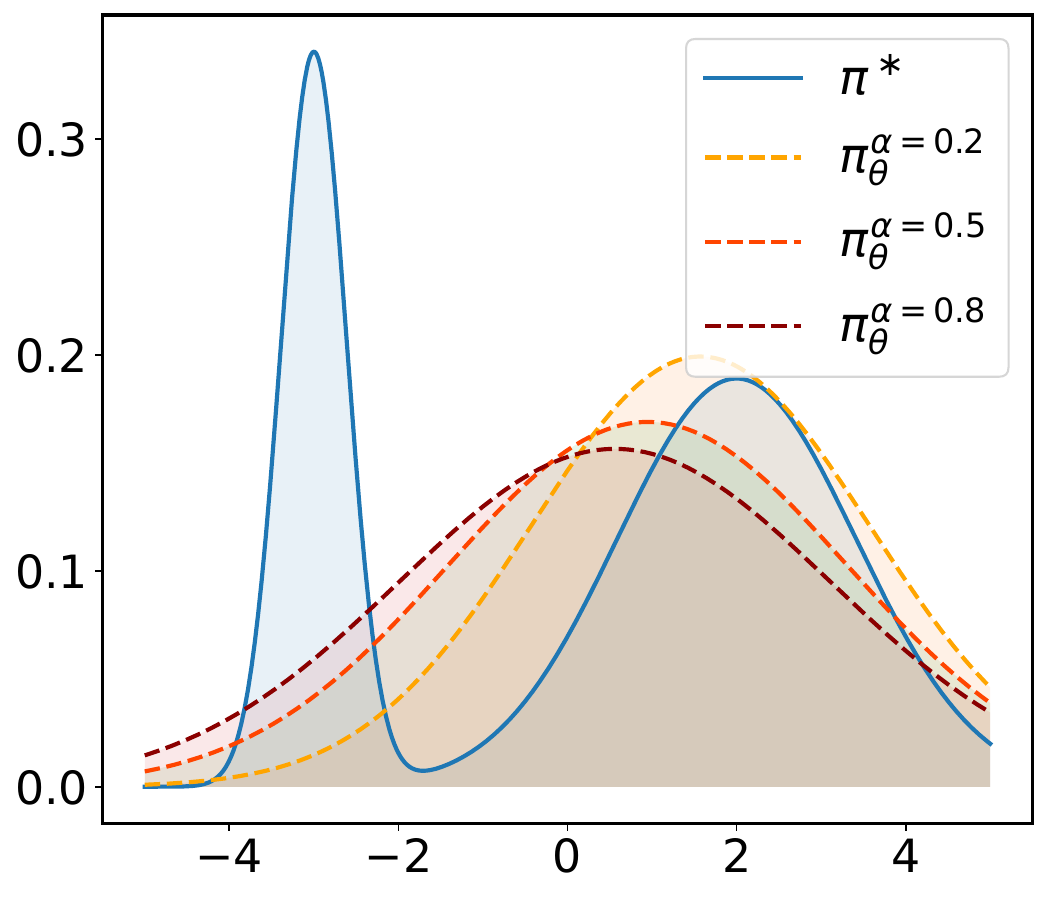}
        \caption{$\alpha$-Divergence}
    \end{subfigure}
    \caption[]{Illustration of the behavior of several divergences $\FDiv(\pi_\theta\| \pi^*)${\footnotemark} in the $f$-divergence family. $\pi^\ast$ indicates the target policy while $\pi_\theta$ is the parameterized policy obtained by optimizing $\theta$ through minimizing certain $f$-divergence. Notably, $\alpha$-divergence exhibits varying characteristics when adjusting the values of $\alpha$.}
    \label{fig:six-figures}
\end{figure*}

\section{Preliminaries}
\label{sec:prelim}

\subsection{Reinforcement Learning from Human Feedback}
\label{subsec:prelim-rlhf}

We introduce the general Reinforcement Learning from Human Feedback (RLHF), an offline fine-tuning framework after the base model has been well pretrained. A typical RLHF process is composed of three steps: 1) supervised fine-tuning, 2) reward model learning, and 3) RL fine-tuning. In the RL fine-tuning process, given the dataset $\data$ consisting of prompts and reward function $r_{\phi}(\cdot|x)$ learned following the Bradley-Terry model~\citep{bradley1952rank} on the preference dataset, the language model $\pi_\theta$ is optimized by maximizing the following objective:
\begin{equation}
\label{eq:rlhf}
    \expect_{x \sim \data, y\sim \pi_{\theta}(\cdot|x)} \left[r_{\phi}(x,y)\right] - \beta \KL(\pi_{\theta}(\cdot|x)|\pref(\cdot|x)),
\end{equation}
where $\pref(\cdot|x)$ is the fixed reference model obtained after supervised fine-tuning, and $\beta$ is a coefficient controlling the reverse KL divergence penalty. Analytically, the solution to the regularized objective above can be written as~\citep{peters2007reinforcement}:
\begin{equation}
\label{eq:pi-optimal}
    \pi^* = \frac{1}{Z(x)} \pref \exp{\left(\beta^{-1}r_\phi(x,y)\right)},
\end{equation}
where $Z(x)$ is the partition function. The RL fine-tuning aims to learn a parameterized policy $\pi_\theta$ to approximate the optimal policy $\pi^*$. 

\subsection{Preference Optimization}
\label{subsec:prelim-po}

The DPO method~\citep{rafailov2023direct} is one of the most popular offline preference optimization methods. It uses the RL objective under the reverse KL divergence constraint \Cref{eq:rlhf} to build a functional mapping between the reward model and the optimal policy:
\begin{equation}
    r(\cdot|x) = \beta \log \frac{\pi_{\theta}(\cdot|x)}{\pref(\cdot|x)} + {\beta\log Z(x)},
\end{equation}
which allows the direct optimization of the policy $\pi_\theta$. This is obtained by plugging the reward into the Bradley-Terry model and reparameterizing the reward function using the policy \textit{i.e.}, the language model, in a supervised manner:
\begin{equation}
\begin{aligned}
\label{eq:dpo-obj}
    & -\expect_{(x, y_w, y_l)\sim \data} \Big[\log \sigma \Big(\beta \log \frac{\pi_{\theta}(y_w|x)}{\pref(y_w|x)} + \cancel{\beta\log Z(x)} \\
    & \quad \quad \quad \quad \quad \quad - \beta \log \frac{\pi_{\theta}(y_l|x)}{\pref(y_l|x)} - \cancel{\beta\log Z(x)} \Big)\Big],
\end{aligned}
\end{equation}
where $\gD = (\{x, y^w, y^l\})$ is the dataset of ranked pairs of generations and their prompts, where $y^w$ and $y^l$ denote ``winning'' and ``losing'' samples, and $\sigma$ is the sigmoid function, and the partition functions are canceled. However, this derivation of DPO is based on the assumption that optimal solution is achieved, which in practice is not guaranteed especially in the early stage training\citep{ji2024towards}. 
This issue leads to a compromised approximation of the optimal policy, especially when the optimality of $\pi_\theta$ is not achieved in the early stage training.

\begin{table*}[!tb]
\centering
\small
\caption{Summary of some typical $f$-divergences $\mathbb{D}_f(p\|q)$ together with generator
functions. 
Part of the list of divergences are borrowed from~\citet{nielsen2013chi,nowozin2016f}.
For all divergences we have the generator function $f: \mathbb{R}^+ \to \mathbb{R} \cup
\{+\infty\}$ strictly convex and lower-semicontinuous. The function also satisfies
$f(1)=0$ which ensures that $\mathbb{D}_f(p\|q)=0$ when $p(x)=q(x)$.
}
\resizebox{\linewidth}{!}{%
\begin{tabular}{lll}
\toprule
Name & $\mathbb{D}_f(p\|q)$ & Generator $f(u)$ 
\\ 
\midrule

Kullback-Leibler
& $\int p(x) \log \frac{p(x)}{q(x)} \,\textrm{d}x$
& $u \log u$
\\
Reverse Kullback-Leibler
& $\int q(x) \log \frac{q(x)}{p(x)}\,\textrm{d}x$
& $-\log u$
\\

$\alpha$-divergence ($\alpha \notin \{0,1\}$)
& $\frac{1}{\alpha (\alpha-1)} \int
  \left(q(x) \left[\left(\frac{p(x)}{q(x)}\right)^{1-\alpha}- (1 - \alpha)\left(\frac{p(x)}{q(x)}\right) -\alpha\right]\right)
  \,\textrm{d}x$
& $\frac{1}{\alpha(\alpha-1)} \left(u^{1-\alpha} - (1-\alpha) u - \alpha\right)$
\\

Jeffrey   
& $\int\left(p(x)-q(x)\right) \log \left(\frac{p(x)}{q(x)}\right) \,\textrm{d}x$
& $(u-1) \log u$
\\
Jensen-Shannon
& $\frac{1}{2} \int p(x) \log \frac{2 p(x)}{p(x)+q(x)}
  + q(x) \log \frac{2 q(x)}{p(x) + q(x)}\,\textrm{d}x$
& $-(u+1) \log \frac{1+u}{2} + u \log u$
\\


Squared Hellinger
& $\int\left(\sqrt{p(x)} - \sqrt{q(x)}\right)^2 \,\textrm{d}x$
& $\left(\sqrt{u}-1\right)^2$
\\


\bottomrule
\end{tabular}
}%
\label{tab:f-divergences}
\end{table*}

\subsection{$f$-divergence}
\label{subsec:prelim-fdiv}

\footnotetext{Note that,
to facilitate using $f$-divergence framework, 
in this paper we describe divergences by $\FDiv(\pi_\theta\| \pi^*)$ instead of the conventional order $\FDiv(\pi^* \|\pi_\theta)$,
which results in interpretations of the divergence behaviors different from other literature~\citep{ji2024towards}.
}

Divergence describes the difference between two probability distributions.
A general family of divergences is the $f$-divergences~\citep{csiszar2004information,liese2006divergences}, also
known as the Ali-Silvey distances~\citep{ali1966general}.
For a convex function $f: \R^+\rightarrow \R$ that is lower-semicontinuous, strictly convex around $1$, and satisfies $f(1) = 0$, given two distributions $p$ and $q$, the corresponding $f$-divergence for these two distributions is defined as:
\begin{equation}
    \FDiv(p \| q) = \expect_{q(x)}\left[f\left(\frac{p(x)}{q(x)}\right)\right],
\end{equation}
where $f$ is called \emph{generator function}.
Different choices of $f$-divergence can cover a wide class of popular divergences, including forward and reverse Kullback-Leibler (KL) divergence, Jensen-Shannon~(JS) divergence, and Jeffrey's divergence, etc. We provide a summary of their analytic forms and corresponding generator functions in Table~\ref{tab:f-divergences}. An illustration of different behaviors of these divergences is presented in Fig.~\ref{fig:six-figures}.

\section{Methodology}
\label{method:general_section}

In this section, we present our proposed \methodabbrv in details. We start by introducing the \methodabbrv framework in \Cref{subsec:method-fpo}, which conducts alignment under the probability matching perspective with $f$-divergence. In \Cref{subsec:method-general}, we show practical \methodabbrv implementation given common pair-wise preference data without reward labeling, and in \Cref{subsec:method-instances}, we further present several instances of \methodabbrv with specified $f$-divergence, covering DPO as a specific case. In \Cref{subsec:extention}, we further discuss the connections between \methodabbrv and several recent heuristic alignment methods, and further introduce several variants of \methodabbrv.

\subsection{The~\methodabbrv Framework}
\label{subsec:method-fpo}

As shown in \Cref{sec:prelim}, the goal of preference optimization is to fine-tune the LLM $\pi_\theta$ toward the optimal policy $\pi^*$, \textit{i.e.}, minimizing the distances between $\pi_\theta$ and $\pi^*$. In this paper, we explicitly define the LLM alignment task as a distribution matching problem:
\begin{theorem}
    Let $\hat{\pi}_\theta(y|x) \propto \pi_\theta(y|x)^{\beta}  \pref(y|x)^{(1-{\beta})}$ and $\hat{\pi}^* \propto \pref \exp{\left(r(x,y)\right)}$. We define our alignment objective as minimizing the following $f$-divergence:
    \begin{equation}
        \FDiv(\hat{\pi}_\theta\|\hat{\pi}^*) 
        =\expect_{\hat{\pi}^*} \left[f\left(\frac{\hat{\pi}_\theta}{\hat{\pi}^*}\right)\right]
        =\expect_{\pref} \left[\frac{\hat{\pi}^*}{\pref}f\left(\frac{\hat{\pi}_\theta}{\hat{\pi}^*}\right)\right].
    \end{equation}
    With unlimited model capacity and perfect optimization, the optimal policy $\pi^*_\theta$ satisfies that $\pi^*_\theta = \pi^*$.
\label{thm:optimal-pi}
\end{theorem}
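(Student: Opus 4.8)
The plan is to exploit the defining property of any valid $f$-divergence, namely that $\FDiv(p\|q)\ge 0$ with equality if and only if $p=q$, which follows from strict convexity of $f$ around $1$ together with $f(1)=0$ (Jensen's inequality). Since $\FDiv(\hat\pi_\theta\|\hat\pi^*)\ge 0$ always, the objective attains its global minimum value $0$, and under the assumption of unlimited model capacity and perfect optimization the optimizer $\theta$ is exactly one realizing this minimum, hence $\hat\pi_{\theta^*}=\hat\pi^*$ as distributions over $y$ (for each prompt $x$). So the crux is to translate the equality $\hat\pi_{\theta^*}=\hat\pi^*$ back into a statement about $\pi_\theta$ versus $\pi^*$.

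First I would recall the two definitions: $\hat\pi_\theta(y|x)\propto \pi_\theta(y|x)^{\beta}\pref(y|x)^{1-\beta}$ and $\hat\pi^*(y|x)\propto \pref(y|x)\exp(r(x,y))$. Next I would observe that the optimal RLHF policy from \Cref{eq:pi-optimal} is $\pi^*\propto \pref\exp(\beta^{-1}r)$, equivalently $\pi^*(y|x)^{\beta}\propto \pref(y|x)^{\beta}\exp(r(x,y))$, so that $\pi^*(y|x)^{\beta}\pref(y|x)^{1-\beta}\propto \pref(y|x)\exp(r(x,y))$; in other words, plugging $\pi_\theta=\pi^*$ into the $\hat\pi_\theta$ construction yields exactly $\hat\pi^*$. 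This shows $\pi_\theta=\pi^*$ is a feasible (and hence optimal) choice. For the converse, from $\hat\pi_{\theta^*}=\hat\pi^*$ I would write, for each $x$ and for all $y$,
\begin{equation}
\frac{\pi_{\theta^*}(y|x)^{\beta}\pref(y|x)^{1-\beta}}{Z_\theta(x)}=\frac{\pref(y|x)\exp(r(x,y))}{Z^*(x)},
\end{equation}
solve for $\pi_{\theta^*}(y|x)^{\beta}=\frac{Z_\theta(x)}{Z^*(x)}\pref(y|x)^{\beta}\exp(r(x,y))$, take the $\beta$-th root to get $\pi_{\theta^*}(y|x)=C(x)\,\pref(y|x)\exp(\beta^{-1}r(x,y))$ for some $x$-dependent constant $C(x)>0$, and then invoke normalization ($\sum_y\pi_{\theta^*}(y|x)=1=\sum_y\pi^*(y|x)$) to conclude $C(x)=1/Z(x)$, i.e.\ $\pi_{\theta^*}=\pi^*$.

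The main obstacle — really more of a subtlety to state carefully than a deep difficulty — is handling the normalization constants cleanly: the maps $\pi_\theta\mapsto\hat\pi_\theta$ and $r\mapsto\hat\pi^*$ are only defined up to $x$-dependent partition functions, so one must argue that matching the normalized distributions $\hat\pi_\theta$ and $\hat\pi^*$ pins down $\pi_\theta$ only up to a positive $x$-dependent scalar, which is then forced to be the correct normalizer by requiring $\pi_{\theta^*}$ itself to be a probability distribution. I would also note the implicit assumptions needed for the "if and only if" part of the divergence bound to bite: $\pref$ should have full support (so the ratios are well-defined and $\hat\pi^*,\hat\pi_\theta$ share support), and $\beta>0$ so the $\beta$-th root step is valid and the power-mean interpolation makes sense. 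With these in place the argument is essentially a one-line application of the nonnegativity-and-equality property of $f$-divergences followed by algebraic inversion of the reparameterization.
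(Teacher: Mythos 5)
Your proposal is correct and follows essentially the same route as the paper's proof: establish that $\FDiv(p\|q)\ge 0$ with equality iff $p=q$ (the paper does this via the supporting-line inequality $f(t)\ge f'(1)(t-1)$ at $t=1$, which is the same convexity argument you invoke), conclude $\hat\pi_{\theta^*}=\hat\pi^*$, and then algebraically invert the reparameterization, absorbing the $x$-dependent partition constants via normalization. Your added remarks on the feasibility direction, the support of $\pref$, and $\beta>0$ are careful refinements of the same argument rather than a different approach.
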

We leave the full proof in~\Cref{app:sec:proof}. Intuitively, this objective aims to optimize $\hat{\pi}_\theta(y|x)$ towards $\hat{\pi}^*$, which is equivalent to optimizing ${\pi}_\theta(y|x)$ towards ${\pi}^*$ in \Cref{eq:pi-optimal}. 
According to the definition of $\hat{\pi}^*$, we have that the reward function can be written as $r(x, y)=\log\hat{\pi}^*(y|x) - \log{\pref(y|x)}$.
By further defining the log ratio $g_\theta(x, y)= \log \hat{\pi}_\theta(y|x) - \log{\pref(y|x)} = \beta (\log {\pi}_\theta(y|x) - \log{\pref(y|x)})$, the above objective can be simplified as:
\begin{equation}
\label{eq:fpo-simplified}
    \FDiv(\hat{\pi}_\theta\|\hat{\pi}^*) =\expect_{\pref} \left[\pi^r f\left(\frac{\pi^{g_\theta}}{\pi^r}\right)\right],
\end{equation}
where we define $\pi^{g_\theta} = \frac{1}{Z_{g_\theta}(x)} \exp(g_\theta(x,y))$ and $\pi^r = \frac{1}{Z_{r}(x)} \exp(r(x,y))$, with $Z_{g_\theta}$ and $Z_{r}$ being the partition functions. In practice, it is intractable to compute these partition functions in the high dimensional space. Instead, we take Monte Carlo estimation by the multiple samples from the preference dataset. In general case, for every prompt $x$ we have $K$ \textit{i.i.d.} completions $y_{1:K}=\{y_{1},\cdots, y_{K}\}$ drawn from $\pref(y|x)$, where we approximate the distributions $\pi^{g_\theta}$ and $\pi^r$ by normalizing the exponential rewards over the $K$ samples:
\begin{align}
    & \pi^{g_\theta}({i}|y_{1:K}, x) = \frac{e^{g_\theta(x, y_{i})}}{\sum_{j=1}^K e^{g_\theta(x, y_{j})}}, \\
    & \pi^{r}({i}|y_{1:K}, x) = \frac{e^{r(x, y_{i})}}{\sum_{j=1}^K e^{r(x, y_{j})}}.
\end{align}
Plugging the above expression into \Cref{eq:fpo-simplified}, we have the complete form of \methodabbrv objective $\gL_\textnormal{\methodabbrv}$ as follows:
\begin{equation}
\begin{aligned}
\label{eq:loss_fpo_general}
    & \gL_\textnormal{\methodabbrv} (\pi_\theta) =\expect_{x \sim \gD} \expect_{y_{1:K} \sim \pref} \Bigg[ \\
    & \quad \frac{e^{r(x, y_{i})}}{\sum_{j=1}^K e^{r(x, y_{j})}} f\left(\frac{e^{g_\theta(x, y_{i})}}{\sum_{j=1}^K e^{g_\theta(x, y_{j})}} \Big/ \frac{e^{r(x, y_{i})}}{\sum_{j=1}^K e^{r(x, y_{j})}} \right)\Bigg].
\end{aligned}
\end{equation}
Formally, the approximated objective with $K$ samples enjoys the following theoretical property, 
\begin{theorem}
\label{thm:divergence}
    With $K \rightarrow \infty$,  asymptotically we have
    \begin{equation}
        \gL_\textnormal{\methodabbrv}(\pi_\theta) = \expect_{x \sim \gD} \FDiv \left(\hat{\pi}_\theta(y|x)\|\hat{\pi}^*(y|x)\right),
    \end{equation}
    where $\hat{\pi}_\theta(y|x)$ and $\hat{\pi}^*(y|x)$ follows the definition in \Cref{thm:optimal-pi}.
\end{theorem}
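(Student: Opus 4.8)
The plan is to show that the Monte Carlo estimator inside $\gL_\textnormal{\methodabbrv}$ converges, as $K\to\infty$, to the integral form of the $f$-divergence in \Cref{eq:fpo-simplified}, and then invoke that identity. Fix a prompt $x$. The empirical distributions $\pi^{g_\theta}(\cdot\mid y_{1:K},x)$ and $\pi^r(\cdot\mid y_{1:K},x)$ are self-normalized importance-sampling approximations: with $y_{1:K}\sim\pref(\cdot\mid x)$ i.i.d., the denominators $\frac1K\sum_{j=1}^K e^{g_\theta(x,y_j)}$ and $\frac1K\sum_{j=1}^K e^{r(x,y_j)}$ are sample averages that converge almost surely, by the strong law of large numbers, to $\expect_{\pref}[e^{g_\theta}] = Z_{g_\theta}(x)$ and $\expect_{\pref}[e^{r}] = Z_r(x)$ respectively (assuming these expectations are finite, which holds under the boundedness of $g_\theta$ and $r$ that is implicit in the setup). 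Consequently $K\,\pi^{g_\theta}(i\mid y_{1:K},x) = e^{g_\theta(x,y_i)}/\big(\frac1K\sum_j e^{g_\theta(x,y_j)}\big) \to e^{g_\theta(x,y_i)}/Z_{g_\theta}(x) = \pi^{g_\theta}(y_i\mid x)$ in the sense that the per-sample weight, rescaled by $K$, approaches the true density-ratio value, and similarly for $\pi^r$.

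Next I would rewrite the summand. The inner expression in \Cref{eq:loss_fpo_general}, once we recognize that summing $i$ over the $K$ samples (the index $i$ should be averaged, i.e. $\frac1K\sum_{i=1}^K$, which is how the Monte Carlo estimate of the outer expectation over the sample index is taken), becomes
\begin{equation}
\frac1K\sum_{i=1}^K \frac{e^{r(x,y_i)}}{\sum_{j} e^{r(x,y_j)}}\, f\!\left(\frac{e^{g_\theta(x,y_i)}/\sum_j e^{g_\theta(x,y_j)}}{e^{r(x,y_i)}/\sum_j e^{r(x,y_j)}}\right)
= \frac1K\sum_{i=1}^K w^r_K(y_i)\, f\!\left(\frac{w^{g_\theta}_K(y_i)}{w^r_K(y_i)}\right),
\end{equation}
where $w^r_K(y_i) = e^{r(x,y_i)}/\sum_j e^{r(x,y_j)}$ and likewise for $g_\theta$. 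Multiplying numerator and denominator of the normalized weights by the appropriate factors, the argument of $f$ equals $\big(e^{g_\theta(x,y_i)}/\frac1K\sum_j e^{g_\theta(x,y_j)}\big)\big/\big(e^{r(x,y_i)}/\frac1K\sum_j e^{r(x,y_j)}\big)$, which by the previous paragraph converges a.s. to $\pi^{g_\theta}(y_i\mid x)/\pi^r(y_i\mid x)$ uniformly enough to pass to the limit. Meanwhile $\sum_{i} w^r_K(y_i) = 1$, so $\frac1K\sum_i w^r_K(y_i) g(\cdot)$ is a weighted average under the empirical $\pi^r$; since $w^r_K$ concentrates on the measure with density proportional to $e^r$, this weighted average converges to $\expect_{\pi^r}\!\big[ f(\pi^{g_\theta}/\pi^r)\big] = \expect_{\pref}\!\big[\pi^r f(\pi^{g_\theta}/\pi^r)\big]$, which is exactly $\FDiv(\hat\pi_\theta\|\hat\pi^*)$ for that $x$ by \Cref{eq:fpo-simplified}. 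Taking the outer expectation over $x\sim\gD$ and using that $\hat\pi_\theta,\hat\pi^*$ are the objects defined in \Cref{thm:optimal-pi} (with $g_\theta = \log\hat\pi_\theta - \log\pref$ and $r = \log\hat\pi^* - \log\pref$) completes the argument.

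The main obstacle is making the interchange of limit and expectation rigorous: the function $f$ may be unbounded near $0$ or $+\infty$ (e.g. $-\log u$ for reverse KL, or $u\log u$), so pointwise a.s. convergence of the integrands does not by itself yield convergence of expectations. To handle this I would impose (or note as already implicit) that $g_\theta$ and $r$ are uniformly bounded in $(x,y)$ — standard for log-ratios of policies with a fixed reference and bounded parameters — so that the arguments $\pi^{g_\theta}/\pi^r$ lie in a compact subinterval of $(0,\infty)$ on which $f$ is continuous and bounded, and the normalizing sums are bounded away from $0$; dominated convergence then applies. Care is also needed because the denominators $\sum_j e^{g_\theta}$ and $\sum_j e^{r}$ appear inside a nonlinear function, so I would first establish joint a.s. convergence of the pair $\big(\frac1K\sum_j e^{g_\theta(x,y_j)},\,\frac1K\sum_j e^{r(x,y_j)}\big)$ and then apply the continuous mapping theorem to the whole summand before averaging — this keeps the bookkeeping clean and isolates the single analytic hypothesis (boundedness) that the argument really needs.
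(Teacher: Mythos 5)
Your proposal is correct and follows essentially the same route as the paper's proof: apply the law of large numbers to the two self-normalizing sums (so that $\frac1K\sum_j e^{g_\theta(x,y_j)}$ and $\frac1K\sum_j e^{r(x,y_j)}$ converge to the respective partition constants), identify the surviving weighted average over the samples as an expectation under $\hat{\pi}^*$, and recognize the result as $\FDiv(\hat{\pi}_\theta\|\hat{\pi}^*)$ via \Cref{eq:fpo-simplified}. The only difference is that you make explicit the boundedness/dominated-convergence considerations needed to pass the limit through $f$ and through the outer expectation, which the paper's proof treats informally.
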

The theorem states that the general \methodabbrv objective is an unbiased optimizer for $f$-divergences between $\hat{\pi}_\theta$ and $\hat{\pi}^*$. In the following, we give the practical form of $\gL_\textnormal{\methodabbrv}$ under typical preference optimization scenario.

\subsection{Generalized Preference Optimization}
\label{subsec:method-general}

Under the distribution matching perspective, we have presented the general \methodabbrv framework where we use $f$-divergence to align $\pi_\theta$, with $K$ data completions and an underlying reward model $r$. In this section, we show the instance of \methodabbrv objective under practical preference optimization settings:

\textbf{Pair-wise Preference Data}. The most common preference dataset typically consists of pair-wise completions for each prompt, \textit{i.e.}, setting $K=2$. Since there are two completions $y$ for each prompt $x$, we can denote them as ``winning" sample $y_w$ and ``losing" sample $y_l$. Then~\Cref{eq:loss_fpo_general} can be simplified as:
\begin{equation}
{
\small
\begin{aligned}
\label{eq:f-div_pair-data}
    \gL_\textnormal{\methodabbrv}& (\pi_\theta) = \expect_{x \sim \gD} \expect_{y_{1:2} \sim \pref}  \left[\pi^r f\left( { \frac{
        e^{g_\theta(x, y_{i})}
    }{
        \sum_{j=1}^2
        e^{g_\theta(x, y_{j})}
    }  } \Big/ {\pi^r}\right)\right], \\
    = & \expect_{x \sim \gD} \expect_{\{y_w, y_l\} \sim \pref} \Bigg[ \pi^{r_w} f \left(\frac{ \sigmoid(g_\theta(x, y_w)-g_\theta(x, y_l)) }{\pi^{r_w}}\right) \\
    & \quad \quad \quad \quad \quad \quad + \pi^{r_l} f \left(\frac{ \sigmoid(g_\theta(x, y_l)-g_\theta(x, y_w)) }{\pi^{r_l}}\right)\Bigg],
\end{aligned}
}
\end{equation}
where $\pi^{r_w}=\sigma(r_w-r_l)$ and $\pi^{r_l}=\sigma(r_l-r_w)$ are self-normalized reward values for winning and losing samples, respectively. Such formulation follows the Bradley-Terry model~\citep{bradley1952rank}. 

\textbf{Preference Data without Reward Value}. In common cases, the preference data is accessible without reward label. We follow the BT model form and turn $\pi^{r_w}$ and $\pi^{r_l}$ to binary supervision labels, \textit{i.e.}, we set $\pi^{r_w} \approx 1$ and $\pi^{r_l} \approx 0$. To avoid the numerical issue, we smooth the labels by setting $\pi^{r_w} = 1 - \epsilon$ and $\pi^{r_l} = \epsilon$, where $\epsilon >0$ is a hyperparameter controlling the smoothness. Then we further simplify~\Cref{eq:f-div_pair-data} as:
\begin{equation}
{\small
\begin{aligned}
\label{eq:fpo}
    \gL_\textnormal{\methodabbrv}(\pi_\theta) & =\expect_{x \sim \gD} \expect_{\{y_w, y_l\} \sim \pref} \Bigg[ \\
    & (1-\epsilon) f \left(\frac{ \sigmoid(g_\theta(x, y_w)-g_\theta(x, y_l)) }{1-\epsilon}\right) \\
    & + \epsilon f \left(\frac{ \sigmoid(g_\theta(x, y_l)-g_\theta(x, y_w)) }{\epsilon}\right)\Bigg].
\end{aligned}
}
\end{equation}
By far we have derived the \methodabbrv objective with common pair-wise preference data. By plugging in any function $f$ that satisfies the requirement in \Cref{subsec:prelim-fdiv}, \textit{e.g.} the generator functions in \Cref{tab:f-divergences}, we can get practical \methodabbrv optimization objectives. We will introduce several instances of \methodabbrv in the following section.

\subsection{Instances of \methodabbrv}
\label{subsec:method-instances}

In this section, we show several instances of \methodabbrv with specified $f$-divergences. We first briefly show that when taking the reverse KL divergence as instance, \methodabbrv can recover the original DPO objective. Then we present the specific instance with $\alpha$-divergence, \textit{a.k.a} $\alpha$-PO, as an example, which achieves the most competitive results in our experiments.

\textbf{DPO as \methodabbrv with Reverse KL}. First, we show that from our $f$-divergence minimization perspective, it can be easily demonstrated that DPO~\citep{rafailov2023direct} corresponds to minimizing the reverse KL divergence $\RKLDiv(\pi_\theta\|\pi^*)$, as briefly derived here:
\begin{proof}
By realizing the function $f$ in \Cref{eq:fpo} as the generator function of reverse KL, \textit{i.e.}, $-\log u$, we have:
\begin{align*}
\small
    & \gL_\textnormal{\methodabbrv-RKL}(\pi_\theta) =\expect_{x \sim \gD} \expect_{\{y_w, y_l\} \sim \pref} \big[ \\
    & \quad \quad - (1- \epsilon) \log \left({ \sigmoid(g_\theta(x, y_w)-g_\theta(x, y_l)) }\right) \\
    & \quad \quad - \epsilon \log \left({ \sigmoid(g_\theta(x, y_l) - g_\theta(x, y_w)) }\right) \\
    & \quad \quad - (1- \epsilon) \log(1-\epsilon) - \epsilon \log \epsilon \big].
\end{align*}
When $\epsilon \rightarrow 0$, we have that $\lim_{\epsilon \rightarrow 0} (1- \epsilon) \log(1-\epsilon) = 0$ and $\lim_{\epsilon \rightarrow 0} \epsilon \log \epsilon = 0$. Besides, we also have $\lim_{\epsilon \rightarrow 0} \epsilon \log \left({ \sigmoid(g_\theta(x, y_l) - g_\theta(x, y_w)) }\right) = 0$. Therefore, without reward label smoothing, the objective is:
\begin{align*}
    \lim_{\epsilon \rightarrow 0} \gL_\textnormal{\methodabbrv-RKL}(\pi_\theta) = - \log \left({ \sigmoid(g_\theta(x, y_w)-g_\theta(x, y_l)) }\right),
\end{align*}
which recovers the original DPO objective \Cref{eq:dpo-obj}.
\end{proof}
Similarly, when combined with forward KL function $u \log u$, the \methodabbrv instance corresponds to EXO~\citep{ji2024towards}. The derivation is provided in~\Cref{app:sec:proof} due to limited space, akin to the DPO derivation above but with a different generator function.

\textbf{\methodabbrv with $\alpha$-divergence}. We present the alignment objective when using $\alpha$-divergence, \textit{a.k.a} $\alpha$-PO, as an example. By plugging the generator function of $\alpha$-divergence into \Cref{eq:fpo}, we have the objective as:
\begin{equation}
\begin{aligned}
    \gL_\textnormal{\methodabbrv}&(\pi_\theta) =\expect_{x \sim \gD} \expect_{\{y_w, y_l\} \sim \pref} \Big[ \frac{1}{\alpha(\alpha-1)} \\
    & (1-\epsilon) \left(u_1^{1-\alpha} - (1-\alpha) u_1 - \alpha\right) \\
    & + \epsilon \left(u_2^{1-\alpha} - (1-\alpha) u_2 - \alpha\right) \Big],
\end{aligned}
\end{equation}
where the densities ratios $u$ are defined as $u_1 = \frac{ \sigmoid(g_\theta(x, y_w)-g_\theta(x, y_l)) }{1-\epsilon}$ and $u_2 = \frac{ \sigmoid(g_\theta(x, y_l)-g_\theta(x, y_w)) }{\epsilon}$, respectively. $\alpha$-divergence covers a family of divergences with varying $\alpha$ values. As $|\alpha|$ increases, the divergence becomes more sensitive to differences in the tails of the distributions. Specifically, it converges to KL divergence when $\alpha \rightarrow 0$ and reverse KL divergence when $\alpha \rightarrow 1$. This property allows us to interpolate between KL and reverse KL. In our empirical study, we achieve the most competitive results using $\alpha$ divergence among all \methodabbrv variants.

\begin{figure}[t!]
    \centering
    \includegraphics[width=0.96\linewidth]{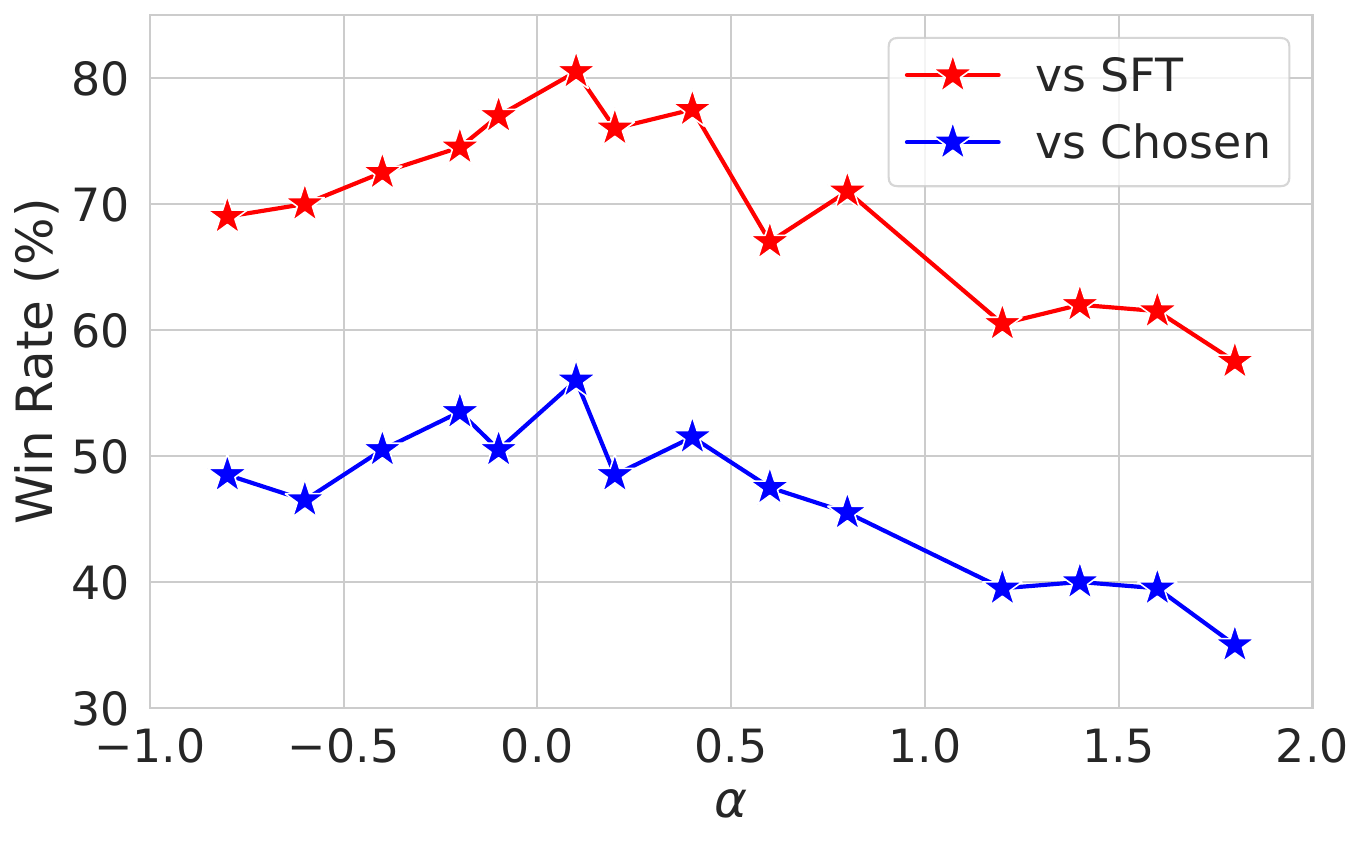}
    \caption{The win rate (\%) against SFT and Chosen of Pythia-2.8B fine-tuned by~\methodabbrv on Anthropic HH dataset with different values of $\alpha$.}
    \label{fig:alpha-sweep}
\end{figure}

\subsection{Empirical Variants with Approximations}
\label{subsec:extention}

As shown in the objective $\gL_\textnormal{\methodabbrv} (\pi_\theta)$ in~\Cref{eq:fpo}, we conduct optimization over the log density odds $g_\theta(x, y_w)-g_\theta(x, y_l) = \beta (\log {\pi}_\theta(y_w|x) - \log{\pref(y_w|x)} - \log {\pi}_\theta(y_l|x) + \log{\pref(y_l|x)})$. This expression is reported to have several drawbacks~\citep{meng2024simpo}: 1) the presence of $\pref$ incurs additional training cost, and 2) the objective mismatch the generation metric where sentence likelihood is averaged by sentence length. To this end, heuristic approximations have been proposed, which yield better empirical results:
\begin{equation}
\small
\label{eq:simpo-style}
    \hat{g}_\theta(x, y_w)-\hat{g}_\theta(x, y_l) = \frac{\beta}{|y_w|} \log {\pi}_\theta(y_w|x) - \frac{\beta}{|y_l|} \log {\pi}_\theta(y_l|x) - \gamma,
\end{equation}
where $|y|$ is the sentence length of corresponding completions $y$, and $\gamma$ is a target margin hyperparameter approximating $\beta(\log{\pref(y_w|x)} - \log{\pref(y_l|x)})$. Recent progress shows that this practical implementation simplifies the training objective and greatly improves alignment performance~\citep{meng2024simpo}. Thanks to the generality of our framework, empirically we can take this approximation into our method by substituting the $g_\theta$ odds in \Cref{eq:fpo} with the $\hat{g_\theta}$ one above. This empirical variant enjoys improved training efficiency and also leads to significantly higher performance.

\section{Experiments}

In this section, we start by investigating the choices of the $f$-divergence in our~\methodabbrv in~\Cref{exp:function_f_ablation} that helps us identity the most performant form, which is further evaluated on popular LLM benchmarks in~\Cref{exp:simpo_table}.



\subsection{Experiments on the Choice of $f$ in~\methodabbrv}
\label{exp:function_f_ablation}

\begin{figure}[t!]
    \centering
    \includegraphics[width=0.96\linewidth]{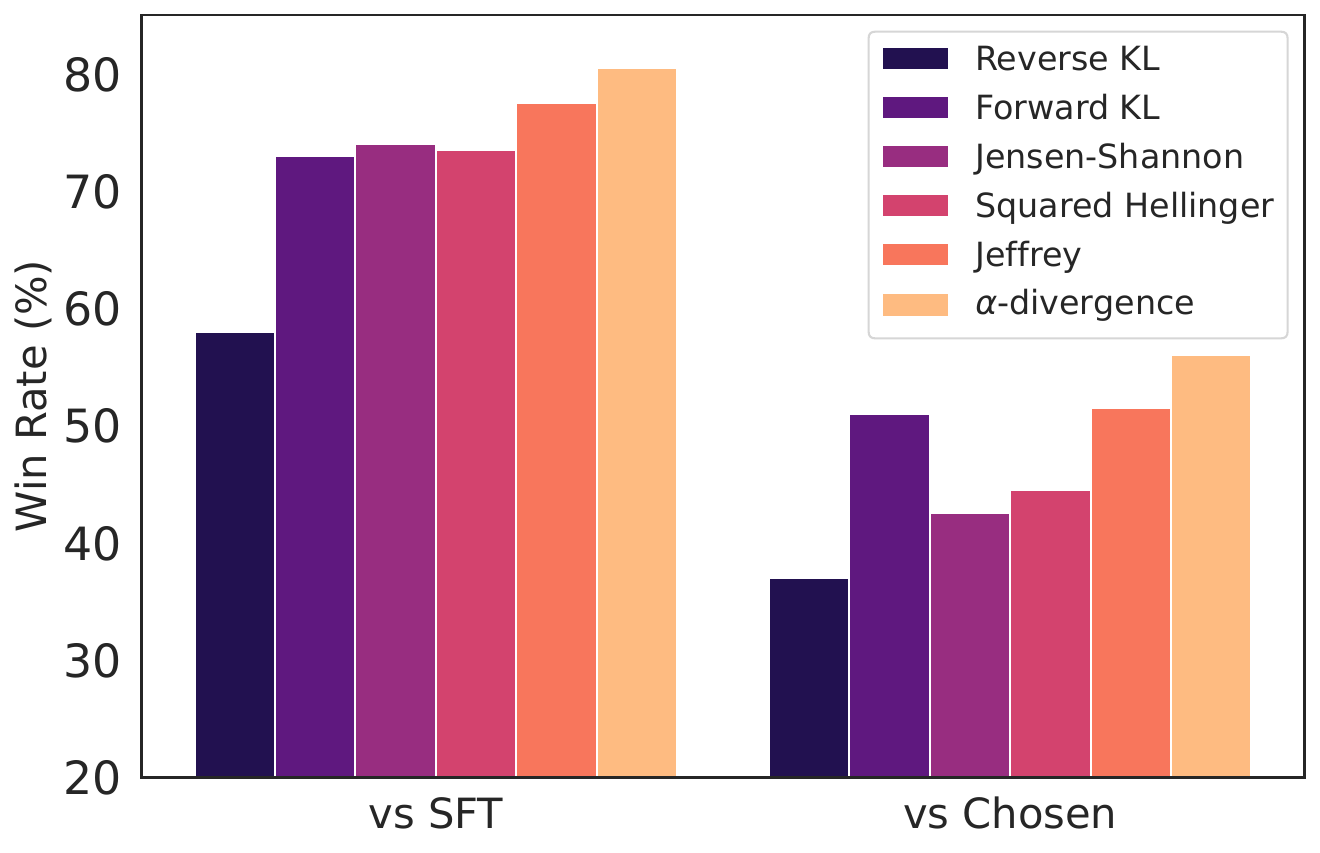}
    \caption{The win rate (\%) of Pythia-2.8B fine-tuned by 6 different $f$-divergences using \methodabbrv on Anthropic HH dataset. $\alpha$-divergence generally performs the best.}
    \label{fig:different-divs}
\end{figure}

In this subsection, we empirically investigate the effect of different instantiations of $f$ in performing preference optimization of LLMs. We detail our setups as follows.

\textbf{Datasets.} Here we utilize two datasets, the Reddit TL;DR summarization dataset ~\citep{volske-etal-2017-tl} and the Anthropic Helpful and Harmless (HH) dialogue datasets ~\citep{bai2022traininghelpfulharmlessassistant}, in this subsection.  The TL;DR dataset contains Reddit posts with user-generated summaries. We employ the filtered dataset \citep{stiennon2020learning} to train the SFT model, and the preference dataset for preference optimization. The HH dataset provides multi-turn dialogues between users and AI assistants. We leverage the chosen response for SFT model and the helpful subset of this preference dataset for preference optimization.

\textbf{Setups.} Following~\citet{ji2024towards}, we explore two alignment protocols for each dataset. \textbf{1.} Direct preference training (\texttt{w/ Preferences}): We employ a dataset $\gD_{\text{pref}}$ with data points in the form of $(x, y_w, y_l)$, where $x$ is the input, $y_w$ and $y_l$ represent the preferred and rejected response, respectively.
\textbf{2.} Reward-based training (\texttt{w/ Reward Model}): We employ a dataset $\gD_{\text{rw}}$, where each instance comprises an input $x$ and $K$ pairs of $(y_k, r_k)$, with $y_k$ being a response generated by the SFT model, and $r_k$ being the corresponding reward labeled by certain reward model trained on the preference dataset $\gD_{\text{pref}}$.

\textbf{Model and evaluation.} Here we use Pythia-2.8B~\citep{biderman2023pythia} as the backbone to perform preference optimization. For evaluation, we adopt GPT-4 for zero-shot pair-wise comparisons of outputs generated by our current model against those produced by (1) the SFT model and (2) the preferred responses from our original preference dataset. We reuse the evaluation prompt from \cite{ji2024towards} detailed in \Cref{app:sec:prompt_details}, which has been shown to align closely with human judgments~\citep{rafailov2023direct}.

\begin{table}[!t]
  \centering
  \caption{The win rate (\%) against SFT and Chosen of Pythia-2.8B fine-tuned by various alignment algorithms on TLDR and Anthropic HH datasets. Results of baselines are taken from~\citet{ji2024towards}. Best numbers of preference optimization methods are boldfaced.}
  \resizebox{\columnwidth}{!}{
    \begin{tabular}{lcccc}
    \toprule
          & \multicolumn{2}{c}{\textbf{TLDR}} & \multicolumn{2}{c}{\textbf{Anthropic HH}} \\
          & \textbf{vs SFT} & \textbf{vs Chosen}  & \textbf{vs SFT}  & \textbf{vs Chosen}  \\
    \midrule
           \multicolumn{5}{c}{\small{\texttt{w/ Preferences}}} \\
    \midrule
    $\text{DPO}_\text{pref}$ & 57.0      &   30.5    & 58.0      & 37.0 \\
    $\text{EXO}_\text{pref}$ &   83.0    &    55.0   &  73.0     & 51.0 \\
    $f\text{-PO}_\text{pref}$ &   \textbf{84.0}    &    \textbf{61.5}  &   \textbf{80.5}    & \textbf{56.0} \\
    \midrule
           \multicolumn{5}{c}{\small{\texttt{w/ Reward Model}}} \\
    \midrule
    Best-of-$N$ &   83.5    &  60.0     &   86.0    &  63.0 \\
    PPO   &   77.0    &    52.0   &   66.5    &  52.0 \\
    $\text{DPO}_\text{rw}$ &   70.0    &   41.0    &   75.5    &  49.0 \\
    $\text{EXO}_\text{rw}$ &   84.5    &   \textbf{64.0}    &   83.5    &  60.0 \\
    $f\text{-PO}_\text{rw}$ &  \textbf{87.5}   &   60.0    &    \textbf{85.0}   & \textbf{63.5} \\
    \bottomrule
    \end{tabular}%
    }
  \label{tab:f_function_ablation}%
  \vskip -0.1in
\end{table}%

\textbf{The effect of $\alpha$ in $\alpha$-PO.} We first examine the impact of $\alpha$ in $\alpha$-PO in the preference training setup by performing a sweep over $\alpha$, with results illustrated in Fig.~\ref{fig:alpha-sweep}. We find that by controlling $\alpha$ we are able to effectively maneuver between mode-seeking and mode-covering, which results in a smooth change of win rate. Notably, the performance approximates that of forward KL when $\alpha$ approaches 0 and that of reverse KL when $\alpha$ becomes close to 1 (\emph{c.f.} Fig.~\ref{fig:different-divs}), aligning with our theoretical analysis. We obtain the best result with $\alpha=0.1$ under this experimental setup.

\textbf{The effect of different $f$ in the $f$-divergence family.} We further study the empirical performance when optimizing the model with different $f$ enumerated in Table~\ref{tab:f-divergences}. For $\alpha$-divergence, we use $\alpha$ that leads to the best performance as investigated in the previous study. As depicted in Fig.~\ref{fig:different-divs}, we observe that $\alpha$-divergence generally contributes to the highest win rate with up to $>20\%$ enhancement compared with reverse KL (\emph{i.e.}, DPO), while some other types of $f$-divergences, \emph{e.g.}, Jeffrey's divergence, also demonstrate competitive performance. Inspired by these results, we will prioritize $\alpha$-PO in the benchmarks in the following sections due to its superior performance.

\textbf{Overall comparison.} We thoroughly compare~\methodabbrv against several baselines (\emph{e.g.}, DPO and EXO) in both \texttt{w/ Preferences} and \texttt{w/ Reward Model} setting with results displayed in Table~\ref{tab:f_function_ablation}. For the reward model setting, we additionally report the numbers of PPO~\citep{schulman2017proximal} and Best-of-$N$, which samples $N = 128$ outputs from the SFT
policy and then selects the highest scored response judged by the reward model.
As shown in~\Cref{tab:f_function_ablation}, our $f$-PO outperforms both DPO and EXO by a significant margin in both settings across the two datasets with, for instance, an average of 4\% gain in win rate against SFT model and 6\% against the chosen response over $\text{EXO}_\text{pref}$. Notably, these improvements generalize to reward-based training paradigms, underscoring the method's robustness. The substantial enhancements validate our theoretical framework and highlight the practical advantages of our generalized approach towards preference optimization of LLMs.

\subsection{Benchmarking~\methodabbrv}
\label{exp:simpo_table}

In this subsection, we further demonstrate the strong performance of our~\methodabbrv on four popular benchmarks through four different settings.

\begin{table*}[!t]
\setlength{\tabcolsep}{5pt}
\centering
\small
\caption{Results on AlpacaEval 2~\citep{AlpacaEval}, Arena-Hard~\citep{li2024crowdsourced}, and MT-Bench~\citep{zheng2023judging} under four settings. LC and WR denote length-controlled and raw win rate, respectively. SFT models are trained on UltraChat dataset for Base settings and initialized as off-the-shelf models for Instruct settings. Results of baselines are taken from~\citet{meng2024simpo}.}
\vskip -0.1in
\resizebox{\textwidth}{!}{
\begin{tabular}{lcccccccccc}
\toprule
\multirow{3}{*}{\textbf{Method}} & \multicolumn{5}{c}{\textbf{Mistral-Base (7B)}} & \multicolumn{5}{c}{\textbf{Mistral-Instruct (7B)}} \\ 
\cmidrule(lr){2-6}\cmidrule(lr){7-11}
& \multicolumn{2}{c}{\textbf{AlpacaEval 2}} & \multicolumn{1}{c}{\textbf{Arena-Hard}} & \multicolumn{2}{c}{\textbf{MT-Bench}} & \multicolumn{2}{c}{\textbf{AlpacaEval 2}} & \multicolumn{1}{c}{\textbf{Arena-Hard}} & \multicolumn{2}{c}{\textbf{MT-Bench}} \\
\cmidrule(lr){2-3}\cmidrule(lr){4-4} \cmidrule(lr){5-6} \cmidrule(lr){7-8}\cmidrule(lr){9-9}\cmidrule(lr){10-11} 
& {\scriptsize \bf LC (\%)} & {\scriptsize \bf WR (\%)} & {\scriptsize \bf WR (\%)} & {\scriptsize \bf GPT-4 Turbo} & {\scriptsize \bf GPT-4} & {\scriptsize \bf LC (\%)}  & {\scriptsize \bf WR (\%)} & {\scriptsize \bf WR (\%)} & {\scriptsize \bf GPT-4 Turbo} & {\scriptsize \bf GPT-4} \\
\midrule
SFT &  8.4 & 6.2 & 1.3 & 4.8 & 6.3 & 17.1 & 14.7 & 12.6 & 6.2 & 7.5 \\
\midrule
RRHF   & 11.6 & 10.2 &  5.8 & 5.4 & 6.7 & 25.3 & 24.8 & 18.1 & 6.5 & 7.6 \\
SLiC-HF & 10.9 &  8.9 &  7.3 & 5.8 & \textbf{7.4} & 24.1 & 24.6 & 18.9 & 6.5 & \bf 7.8 \\
DPO & 15.1 & 12.5 & 10.4 & 5.9 & 7.3 & 26.8 & 24.9 & 16.3 & 6.3 & 7.6 \\
IPO & 11.8 & 9.4 & 7.5 & 5.5 & 7.2 & 20.3 & 20.3 & 16.2 & 6.4 & \bf 7.8 \\
CPO &  9.8 &  8.9 &  6.9 & 5.4 & 6.8 & 23.8 & 28.8 & \textbf{22.6} & 6.3 & 7.5 \\
KTO & 13.1 & 9.1 & 5.6 & 5.4 & 7.0 & 24.5 & 23.6 & 17.9 & 6.4 & 7.7 \\
ORPO & 14.7 & 12.2 & 7.0 & 5.8 & 7.3 & 24.5 & 24.9 & 20.8 & 6.4 & 7.7 \\
R-DPO  & 17.4 & 12.8 & 8.0 & 5.9 & \textbf{7.4} & 27.3 & 24.5 & 16.1 & 6.2 & 7.5 \\
SimPO & 21.5 & 20.8 & 16.6 & \bf 6.0 & 7.3 & 32.1 & 34.8 & 21.0 & \bf 6.6 & 7.6 \\
\midrule
$f$-PO & \textbf{23.7} & \textbf{22.0} & \textbf{16.8} & \textbf{6.0} & 
\textbf{7.4} & \textbf{32.9} & \textbf{35.8} & 21.5 & \bf 6.6 & 7.6 \\
\midrule[.7pt]
\multirow{3}{*}{\textbf{Method}} & \multicolumn{5}{c}{\textbf{Llama3-Base (8B)}} & \multicolumn{5}{c}{\textbf{Llama3-Instruct (8B)}} \\ 
\cmidrule(lr){2-6}\cmidrule(lr){7-11}
& \multicolumn{2}{c}{\textbf{AlpacaEval 2}} & \multicolumn{1}{c}{\textbf{Arena-Hard}} & \multicolumn{2}{c}{\textbf{MT-Bench}} & \multicolumn{2}{c}{\textbf{AlpacaEval 2}} & \multicolumn{1}{c}{\textbf{Arena-Hard}} & \multicolumn{2}{c}{\textbf{MT-Bench}} \\
\cmidrule(lr){2-3}\cmidrule(lr){4-4} \cmidrule(lr){5-6} \cmidrule(lr){7-8}\cmidrule(lr){9-9}\cmidrule(lr){10-11} 
& {\scriptsize \bf LC (\%)} & {\scriptsize \bf WR (\%)} & {\scriptsize \bf WR (\%)} & {\scriptsize \bf GPT-4 Turbo} & {\scriptsize \bf GPT-4} & {\scriptsize \bf LC (\%)}  & {\scriptsize \bf WR (\%)} & {\scriptsize \bf WR (\%)} & {\scriptsize \bf GPT-4 Turbo} & {\scriptsize \bf GPT-4} \\
\midrule
SFT & 6.2 & 4.6 & 3.3 & 5.2 & 6.6 & 26.0 & 25.3 & 22.3 & 6.9 & 8.1 \\
\midrule
RRHF & 12.1 & 10.1 &  6.3 & 5.8 & 7.0 & 31.3 & 28.4 & 26.5 & 6.7 & 7.9 \\
SLiC-HF & 12.3 & 13.7 &  6.0 & 6.3 & 7.6 & 26.9 & 27.5 & 26.2 & 6.8 & 8.1 \\
DPO & 18.2 & 15.5 & 15.9 & 6.5 & 7.7 & 40.3 & 37.9 & 32.6 & 7.0 & 8.0 \\
IPO & 14.4 & 14.2 & 17.8 & 6.5 & 7.4 & 35.6 & 35.6 & 30.5 & 7.0 & \textbf{8.3} \\
CPO  & 10.8 &  8.1 &  5.8 & 6.0 & 7.4 & 28.9 & 32.2 &28.8  & 7.0 & 8.0 \\
KTO & 14.2 & 12.4 & 12.5 & 6.3 & \textbf{7.8} & 33.1 & 31.8 & 26.4 & 6.9 & 8.2 \\
ORPO & 12.2 & 10.6 & 10.8 & 6.1 & 7.6 & 28.5 & 27.4 & 25.8 & 6.8 & 8.0 \\
R-DPO & 17.6 & 14.4 & 17.2 & \textbf{6.6} & 7.5 & 41.1 & 37.8 & 33.1 & 7.0 & 8.0 \\
SimPO & 22.0 & 20.3 & 23.4 & \textbf{6.6} & 7.7 & 44.7 & 40.5 & \textbf{33.8} & 7.0 & 8.0 \\ 
\midrule
$f$-PO & \textbf{23.5} & \textbf{20.9} & \textbf{28.2} & \textbf{6.6} & \textbf{7.8} & \textbf{45.3} & \textbf{41.0} & 33.5 & \textbf{7.1} & 8.2 \\ 
\bottomrule
\end{tabular}
}
\label{tab:main_table}
\vspace{-1pt}
\end{table*}

\begin{table*}[t!]
  \centering
  \caption{Results on LLM Leaderboard v2 with Llama3-Instruct (8B) and Mistral-Base (7B).}
  \vspace{-8pt}
  \small
    \begin{tabular}{lccccccc}
    \toprule
          & IFEval & BBH   & MATH Lvl 5 & GPQA  & MUSR  & MMLU-PRO & Average \\
    \midrule
    SimPO-Llama3-8B-Instruct & 65.04 & 26.71 & 2.57  & \textbf{5.82} & 8.15  & 27.66 & 22.66 \\
    \methodabbrv-Llama3-8B-Instruct & \textbf{67.69} & \textbf{28.75} & \textbf{4.98} & 4.92  & \textbf{9.08} & \textbf{28.79} & \textbf{24.04} \\
    \midrule
    SimPO-Mistral-7B-Base & 47.01 & 22.33 & 0.60   & 4.47  & 8.03  & 18.91 & 16.89 \\
    \methodabbrv-Mistral-7B-Base & \textbf{48.46} & \textbf{24.16} & \textbf{1.66} & \textbf{6.38} & \textbf{12.73} & \textbf{19.72} & \textbf{18.85} \\
    \bottomrule
    \end{tabular}%
  \label{tab:openllm}%
\end{table*}%

\textbf{Models and training details.} We adopt two base models: Llama-3-8B \citep{dubey2024llama3herdmodels} and Mistral-7B \citep{jiang2023mistral7b}, under two setups, \emph{Base} and \emph{Instruct}, following the approach outlined in \cite{meng2024simpo}. 
In the \emph{Base} setup, we first train an SFT model on UltraChat-200k \citep{ding2023enhancing} before performing preference optimization on UltraFeedback \citep{cui2023ultrafeedback}, offering high reproducibility with open-source data and methods.
The \emph{Instruct} setup instead utilizes publicly available instruction-tuned models (Llama-3-8B-Instruct and Mistral-7B-Instruct) as the SFT models, which are more performant but less transparent due to undisclosed finetuning procedures. We use specifically constructed preference datasets
for Llama-3 and Mistral, following~\cite{meng2024simpo}. More details are deferred to \Cref{app:sec:experiment_details}.

This leads to four configurations in total: Llama-3-Base, Llama-3-Instruct, Mistral-Base, and Mistral-Instruct. For our~\methodabbrv, we employed the $\alpha$-divergence algorithm family with a properly tuned $\alpha$ parameter, inspired by the study in~\Cref{exp:function_f_ablation}. Notably, we also utilize the SimPO-style parameterization of~\methodabbrv in~\Cref{eq:simpo-style} due to its superior performance, and report the best performance over a range of $\alpha$. Detailed hyperparameters can be found in \Cref{app:sec:experiment_details}.

\textbf{Evaluation metrics.} We evaluate our models using four popular open-ended instruction-following benchmarks: AlpacaEval 2~\citep{dubois2024length}, MT-Bench~\citep{zheng2023judging}, Arena-Hard 0.1~\citep{li2024crowdsourced}, and Open LLM Leaderboard v2~\citep{open-llm-leaderboard-v2}. These benchmarks assess diverse conversational abilities across various queries. For AlpacaEval 2, we report both the raw win rates (WR) and length-controlled win rates (LC). MT-Bench scores are averaged using GPT-4 and GPT-4-Preview-1106 as judges. Arena-Hard results are reported as win rates against the baseline model. 

\textbf{Baselines.} We compare our~\methodabbrv with a comprehensive set of offline preference optimization methods:
 \begin{inparaenum}[(i)]
\item \textbf{RRHF}~\citep{yuan2023rrhf} and \textbf{SLiC-HF}~\citep{Zhao2023SLiCHFSL} rank losses using length-normalized and direct log-likelihood with an SFT objective, respectively;
\item \textbf{DPO}~\citep{rafailov2023direct} refers to the original direct preference optimization approach;
    \item \textbf{IPO}~\citep{Azar2023AGT} commits to preventing potential overfitting problems in DPO;
    \item \textbf{CPO}~\citep{xu2024contrastive} employs direct likelihood as a reward and trains in conjunction with a behavior cloning objective for winning responses;
    \item \textbf{KTO}~\citep{Ethayarajh2024KTOMA} eliminates the need for pair-wise preference datasets;
    \item \textbf{ORPO}~\citep{Hong2024ORPOMP} introduces a reference-model-free odd ratio term to penalize undesired generation styles;
    \item \textbf{R-DPO}~\citep{Park2024DisentanglingLF} incorporates additional regularization to prevent length exploitation;
    \item \textbf{SimPO}~\citep{meng2024simpo} proposes reference-free rewards that incorporate length normalization and target reward margins between winning and losing responses.  
\end{inparaenum}


\looseness=-1
\textbf{Results.} \Cref{tab:main_table} shows the primary evaluation outcomes of Mistral-7B and Llama-3-8B in both Base and Instruct configurations. While all methods effectively boost the performance over the SFT model, our proposed method, \methodabbrv, achieves the highest score on 16 out of 20 metrics while being competitive on the rest, which highlights the efficacy and robustness of our technique. Quantitatively, \methodabbrv achieves an average improvement of 1.4\% over the strongest baseline SimPO on the AlpacaEval 2 length-controlled win rate metric. Interestingly, in the Llama-3-8B Base setup, we are able to obtain a remarkable enhancement of 4.8\% (28.2\% against 23.4\%) in win rate compared with SimPO on Arena-Hard, further demonstrating the superiority of~\methodabbrv. The results consistently advocate~\methodabbrv as an effective and broadly applicable LLM alignment approach across a wide suite of models and tasks. The results in~\Cref{tab:openllm} further verifies the superiority of our~\methodabbrv, where~\methodabbrv outperforms SimPO consistently with both models (Llama3-8B-Instruct and Mistral-7B-Base) by a significant margin, leading to ~8\% relative improvement in average score. Notably,~\methodabbrv enhances the score of Mistral-7B-Base on the MUSR benchmark from 8.03 to 12.73, and Llama3-8B-Instruct on the IFEval benchmark from 22.66 to 24.04. The results further strengthen our findings of~\methodabbrv as a general and effective preference optimization approach.



\section{Conclusion}

In this paper, we introduce~\methodabbrv, a generalized approach towards preference optimization through $f$-divergence minimization. Our key insight lies in viewing LLM alignment task as distribution matching between the optimal and parameterized policy with the objective defined with $f$-divergence. Our approach generalizes existing techniques to a broader family of alignment objectives, among which the variant of $\alpha$-PO has been demonstrated to achieve state-of-the-art performance on various challenging LLM benchmarks. Our work offers a novel theoretical perspective in understanding preference optimization while enjoying strong empirical performance and wide applicability across different models and tasks.

\section*{Acknowledgment}

We gratefully acknowledge the support of ARO (W911NF-21-1-0125), ONR (N00014-23-1-2159), and Chan Zuckerberg Biohub.
We also gratefully acknowledge the support of NSF under Nos. OAC-1835598 (CINES), CCF-1918940 (Expeditions), DMS-2327709 (IHBEM), IIS-2403318 (III); Stanford Data Applications Initiative, Wu Tsai Neurosciences Institute, Stanford Institute for Human-Centered AI, Chan Zuckerberg Initiative, Amazon, Genentech, GSK, Hitachi, SAP, and UCB.
Minkai Xu thanks the generous support of Sequoia Capital Stanford Graduate Fellowship.



\bibliography{ref}

\clearpage

\appendix

\onecolumn
\aistatstitle{Supplementary Materials for\\
$f$-PO: Generalizing Preference Optimization with\\ 
$f$-divergence Minimization}

\section{Proofs}
\label{app:sec:proof}

\subsection{Proof of \Cref{thm:optimal-pi}}
In this section, we present the formal proof of \Cref{thm:optimal-pi}. For clarity we first repeat the theorem below.
\setcounter{theorem}{0} 
\begin{theorem}
    Let $\hat{\pi}_\theta(y|x) \propto \pi_\theta(y|x)^{\beta}  \pref(y|x)^{(1-{\beta})}$ and $\hat{\pi}^* \propto \pref \exp{\left(r(x,y)\right)}$. We define our alignment objective as minimizing the following $f$-divergence:
    \begin{equation}
        \FDiv(\hat{\pi}_\theta\|\hat{\pi}^*) 
        =\expect_{\hat{\pi}^*} \left[f\left(\frac{\hat{\pi}_\theta}{\hat{\pi}^*}\right)\right]
        =\expect_{\pref} \left[\frac{\hat{\pi}^*}{\pref}f\left(\frac{\hat{\pi}_\theta}{\hat{\pi}^*}\right)\right].
    \end{equation}
    With unlimited model capacity and perfect optimization, the optimal policy $\pi^*_\theta$ satisfies that $\pi^*_\theta = \pi^*$.
\end{theorem}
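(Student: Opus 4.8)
For each prompt $x$ the objective is a single $f$-divergence $\FDiv(\hat\pi_\theta(\cdot|x)\,\|\,\hat\pi^*(\cdot|x))$ (the training loss of \Cref{thm:divergence} averages this over $x\sim\data$), so it suffices to reason per prompt. The plan rests on two elementary facts. The first is the standard nonnegativity-and-equality property of $f$-divergences: since $f$ is convex with $f(1)=0$, Jensen gives $\FDiv(p\|q)=\expect_{q}[f(p/q)]\ge f(\expect_q[p/q])=f(1)=0$; and because $f$ is \emph{strictly} convex around $1$, writing $f(t)=f'(1)(t-1)+R(t)$ with $R\ge 0$ convex shows $R(t)>0$ for all $t\neq 1$ (a zero of $R$ away from $1$ would force $R\equiv 0$ on an interval touching $1$, contradicting strict convexity there), hence $\FDiv(p\|q)=0$ iff $p=q$. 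Consequently the objective is bounded below by $0$, and a minimizer $\theta^\ast$ attains this bound iff $\hat\pi_{\theta^\ast}(\cdot|x)=\hat\pi^*(\cdot|x)$ for every $x$ in the support of $\data$.

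\textbf{Reducing the ``hatted'' identity to $\pi_\theta=\pi^*$.} It remains to show that $\hat\pi_\theta=\hat\pi^*$ is both equivalent to $\pi_\theta=\pi^*$ and attainable. Introduce the geometric-tempering map $T_\beta:\pi\mapsto\hat\pi$ with $\hat\pi(y|x)\propto\pi(y|x)^{\beta}\pref(y|x)^{1-\beta}$, so that $\hat\pi_\theta=T_\beta(\pi_\theta)$ by definition. A one-line computation from \Cref{eq:pi-optimal} gives $T_\beta(\pi^*)(y|x)\propto\big(\pref(y|x)\,e^{\beta^{-1}r(x,y)}\big)^{\beta}\pref(y|x)^{1-\beta}=\pref(y|x)\,e^{r(x,y)}$, i.e.\ $T_\beta(\pi^*)=\hat\pi^*$. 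Then I would verify that $T_\beta$ is injective on densities sharing the full support of $\pref$: it is inverted by $\pi(y|x)\propto\hat\pi(y|x)^{1/\beta}\pref(y|x)^{1-1/\beta}$, which is well defined for $\beta>0$ and recovers $\pi$ uniquely. Chaining the two, $\hat\pi_\theta(\cdot|x)=\hat\pi^*(\cdot|x)\iff T_\beta(\pi_\theta)(\cdot|x)=T_\beta(\pi^*)(\cdot|x)\iff\pi_\theta(\cdot|x)=\pi^*(\cdot|x)$.

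\textbf{Attainability and conclusion.} Under ``unlimited model capacity'' the realizable family $\{\pi_\theta\}$ contains $\pi^*$, so some $\theta$ drives the objective to its lower bound $0$; under ``perfect optimization'' the returned $\theta^\ast$ is a global minimizer, so the objective vanishes there, hence $\hat\pi_{\theta^\ast}=\hat\pi^*$ for every $x$, hence $\pi^*_\theta:=\pi_{\theta^\ast}=\pi^*$ by the previous paragraph.

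\textbf{Where the work actually is.} There is no deep step: the content is the two facts above (equality case of the $f$-divergence, injectivity of $T_\beta$) together with the identity $T_\beta(\pi^*)=\hat\pi^*$. The points needing care are bookkeeping rather than conceptual: (i) the $y$-independent partition functions of $\hat\pi_\theta$, $\hat\pi^*$ (and of the finite-sample surrogates $\pi^{g_\theta},\pi^r$) do not affect any ``$\propto$'' manipulation nor the equality conditions; (ii) matching normalizations so that $T_\beta(\pi^*)=\hat\pi^*$ pins down how the reward $r$ in the theorem relates, up to a prompt-dependent additive constant, to the $\beta^{-1}$-scaled reward in \Cref{eq:pi-optimal}; and (iii) injectivity of $T_\beta$, and the applicability of Jensen's equality case, both rely on $\beta\neq 0$ and on $\pref$ having full support so that every ratio fed into $f$ lies in $(0,\infty)$.
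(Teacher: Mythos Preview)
Your proposal is correct and follows essentially the same approach as the paper's proof: establish nonnegativity of the $f$-divergence with equality iff the two distributions coincide (the paper uses the tangent-line bound $f(t)\ge f'(1)(t-1)$ rather than Jensen, but the content is identical), then unwind $\hat\pi_{\theta^\ast}=\hat\pi^*$ algebraically to recover $\pi_{\theta^\ast}=\pi^*$. Your framing via the injective tempering map $T_\beta$ and your explicit treatment of attainability and support assumptions are tidier than the paper's direct computation, but the argument is the same.
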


To prove the theorem, we consider the following property of $f$-divergence.
\begin{lemma}
    $ \FDiv(p\|q) \geq 0$ and the global optimal of $\FDiv(p\|q)$ only achieves when $q=p$. 
    \label{lemma:f}
\end{lemma}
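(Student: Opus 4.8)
The plan is to prove Lemma~\ref{lemma:f}, namely that $\FDiv(p\|q)\geq 0$ with equality only when $p=q$, by a direct application of Jensen's inequality to the convex generator $f$. Recall that under the conventions of \Cref{subsec:prelim-fdiv}, $f:\R^+\to\R$ is strictly convex around $1$, lower-semicontinuous, and satisfies $f(1)=0$, and that $\FDiv(p\|q)=\expect_{q(x)}\left[f\!\left(\frac{p(x)}{q(x)}\right)\right]$. The first step is to observe that $p/q$ is a nonnegative random variable under $q$ whose expectation is $\expect_{q(x)}\!\left[\frac{p(x)}{q(x)}\right]=\int p(x)\,\textrm{d}x = 1$.

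Second, I would invoke Jensen's inequality for the convex function $f$: since $f$ is convex on $\R^+$,
\begin{equation}
    \FDiv(p\|q)=\expect_{q(x)}\left[f\left(\frac{p(x)}{q(x)}\right)\right] \geq f\left(\expect_{q(x)}\left[\frac{p(x)}{q(x)}\right]\right) = f(1) = 0,
\end{equation}
which establishes nonnegativity. The third step is the equality case. Because $f$ is \emph{strictly} convex around $1$, Jensen's inequality is tight only if the random variable $p(x)/q(x)$ is almost surely equal to its mean, i.e.\ $p(x)/q(x)=1$ for $q$-almost every $x$, which is exactly $p=q$ (as measures/distributions). Conversely, if $p=q$ then every term $f(p(x)/q(x))=f(1)=0$, so the divergence vanishes. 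Combining the two directions gives that the global minimum, value $0$, is attained precisely at $q=p$.

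The main technical care needed is in the equality analysis: strict convexity is only assumed \emph{around} $1$ rather than globally, so I would note that this is nevertheless enough — the equality condition in Jensen forces the distribution of $p/q$ to concentrate at the single point $1$, and strict convexity in any neighborhood of that point rules out any nondegenerate distribution there. (One should also handle the support-mismatch edge case: if $p$ is not absolutely continuous with respect to $q$, the relevant term $f(u)$ as $u\to\infty$ drives $\FDiv(p\|q)$ to $+\infty$, so the inequality $\FDiv(p\|q)\geq 0$ is trivially satisfied and equality cannot hold.) Once Lemma~\ref{lemma:f} is in hand, \Cref{thm:optimal-pi} follows quickly: minimizing $\FDiv(\hat\pi_\theta\|\hat\pi^*)$ drives it to $0$, forcing $\hat\pi_\theta=\hat\pi^*$, and then unwinding the definitions $\hat\pi_\theta\propto \pi_\theta^\beta\pref^{1-\beta}$ and $\hat\pi^*\propto\pref\exp(r)$ together with \Cref{eq:pi-optimal} yields $\pi_\theta^*=\pi^*$.
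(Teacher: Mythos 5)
Your proof is correct and takes essentially the same route as the paper's: the paper's supporting-line bound $f(t)\ge f(1)+f'(1)(t-1)$ integrated against $q$ is exactly the standard proof of Jensen's inequality at the mean $\E_{q}[p/q]=1$, which you invoke directly. Your handling of the equality case (using strict convexity around $1$ to force $p/q=1$ $q$-almost everywhere) is, if anything, slightly more careful than the paper's one-line justification.
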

\begin{proof}
By the definition of a convex function, for any $t \geq 0, f(t) \geq f(1)+f^{\prime}(1)(t-1)$. Given that $f(1)=0$, the inequality simplifies to $f(t) \geq f^{\prime}(1)(t-1)$. Then we have
\begin{equation}
f\left(\frac{p(x)}{q(x)}\right) \geq f^{\prime}(1)\left(\frac{p(x)}{q(x)}-1\right).
\end{equation}
Taking expectation over $q$, we have that
\begin{align}
\FDiv(p\|q) &= \int q(x) f\left(\frac{p(x)}{q(x)}\right) d x \geq \int q(x) f^{\prime}(1)\left(\frac{p(x)}{q(x)}-1\right) d x ,\nonumber \\
&= f^{\prime}(1)\left(\int p(x) d x-\int q(x) d x\right)=f^{\prime}(1)(1-1)=0.
\end{align}
The equation is satisfied only when $f\left(\frac{p(x)}{q(x)}\right) = 0$, \emph{i.e.} $\frac{p(x)}{q(x)} = 1$. Then we finish the proof of Lemma~\ref{lemma:f}. 
\end{proof}
Now we get back to the definition of $\hat{\pi}_\theta$ and $\hat{\pi}^*$:
\begin{align}
\hat{\pi}_\theta(y \mid x) = \pi_\theta(y \mid x)^\beta \pi_{\mathrm{ref}}(y \mid x)^{(1-\beta)}/ Z_{\theta}(x),~~~~~~~~~ \hat{\pi}^* =  \pref(y|x) \exp{\left(r(x,y)\right)} / Z_r(x).
\end{align}
Here $Z_{\theta}(x)$ and $Z_r(x)$ are the corresponding partition functions. By Lemma~\ref{lemma:f} we know that the minimizer of our $f$-divergence objective is achieved only when $\hat{\pi}_{\theta^*} = \hat{\pi}^*$, which is equivalent to
\begin{align}
     \pi_{\theta^*}(y \mid x)^\beta \pi_{\mathrm{ref}}(y \mid x)^{(1-\beta)}/ Z_{\theta}(x) =  \pref(y|x) \exp{\left(r(x,y)\right)} / Z_r(x), \nonumber \\
    \Leftrightarrow \pi_{\theta^*}(y \mid x)^{\beta}/ Z_{\theta}(x) =  \pref^\beta(y|x) \exp{\left(r(x,y)\right)} / Z_r(x), \nonumber \\
    \Leftrightarrow \pi_{\theta^*}(y \mid x) =  \pref(y|x) \exp{\left(\beta^{-1}r(x,y)\right)}  (Z_{\theta}(x)/ Z_r(x))^{\frac{1}{\beta}}.
\end{align}
Eliminating the partition-related constant, we have:
\begin{equation}
    \pi_{\theta^*}(y \mid x) \propto  \pref(y|x) \exp{\left(\beta^{-1}r(x,y)\right)} = \frac{1}{Z(x)} \pref(y|x) \exp{\left(\beta^{-1}r(x,y)\right)}  = \pi^{*},
\end{equation}
which concludes the proof.

\subsection{Proof of \Cref{thm:divergence}}

\begin{theorem}
    With $K \rightarrow \infty$,  asymptotically we have
    \begin{equation}
        \gL_\textnormal{\methodabbrv}(\pi_\theta) = \expect_{x \sim \gD} \FDiv \left(\hat{\pi}_\theta(y|x)\|\hat{\pi}^*(y|x)\right),
    \end{equation}
    where $\hat{\pi}_\theta(y|x)$ and $\hat{\pi}^*(y|x)$ follows the definition in \Cref{thm:optimal-pi}.
\end{theorem}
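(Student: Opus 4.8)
The plan is to fix a prompt $x$, show that the random quantity appearing inside the expectations in $\gL_\textnormal{\methodabbrv}(\pi_\theta)$ converges almost surely (as $K\to\infty$) to the deterministic value $\FDiv(\hat\pi_\theta(\cdot\mid x)\,\|\,\hat\pi^*(\cdot\mid x))$, and then to pass this limit through the expectations over $y_{1:K}$ and over $x$. First I would observe that for fixed $x$ and fixed completions $y_{1:K}$, the summand inside $\gL_\textnormal{\methodabbrv}$ is precisely the $f$-divergence $\FDiv\big(\pi^{g_\theta}(\cdot\mid y_{1:K},x)\,\|\,\pi^{r}(\cdot\mid y_{1:K},x)\big)$ between the two self-normalized distributions on the index set $\{1,\dots,K\}$; call it $\widehat D_K$. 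Writing $w_i^{g}=e^{g_\theta(x,y_i)}$ and $w_i^{r}=e^{r(x,y_i)}$, this has the ratio representation
\begin{equation*}
\widehat D_K = \frac{\frac1K\sum_{i=1}^K w_i^{r}\; f\!\left(\tfrac{w_i^{g}}{w_i^{r}}\cdot\tfrac{\frac1K\sum_{j} w_j^{r}}{\frac1K\sum_{j} w_j^{g}}\right)}{\frac1K\sum_{j=1}^K w_j^{r}} .
\end{equation*}

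The crux is to identify the limits of these Monte-Carlo averages with $\hat\pi_\theta$ and $\hat\pi^*$. Since $y_1,\dots,y_K$ are i.i.d.\ from $\pref(\cdot\mid x)$, the strong law gives $\frac1K\sum_j w_j^{g}\to\expect_{\pref}[e^{g_\theta(x,\cdot)}]$ and $\frac1K\sum_j w_j^{r}\to\expect_{\pref}[e^{r(x,\cdot)}]$ almost surely (assuming these are finite). By the definitions in \Cref{thm:optimal-pi}, $\pref(y\mid x)\,e^{g_\theta(x,y)}\propto\hat\pi_\theta(y\mid x)$ and $\pref(y\mid x)\,e^{r(x,y)}\propto\hat\pi^*(y\mid x)$, with $x$-dependent constants that enter numerator and denominator symmetrically and cancel. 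Hence $\tfrac{w_i^{g}}{w_i^{r}}\cdot\tfrac{\frac1K\sum_j w_j^{r}}{\frac1K\sum_j w_j^{g}}\to \hat\pi_\theta(y_i\mid x)/\hat\pi^*(y_i\mid x)$ almost surely, and then, splitting the outer average into the term with this limiting ratio (handled by the SLLN) plus a perturbation controlled by local Lipschitzness of $f$ and the vanishing of $\tfrac{\frac1K\sum_j w_j^{r}}{\frac1K\sum_j w_j^{g}}$ to its limit, the numerator of $\widehat D_K$ converges to $\expect_{\pref}\!\big[e^{r(x,\cdot)} f(\hat\pi_\theta/\hat\pi^*)\big]$ and the denominator to $\expect_{\pref}[e^{r(x,\cdot)}]$. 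Dividing and using $\tfrac{e^{r(x,y)}}{\expect_{\pref}[e^{r(x,\cdot)}]}=\tfrac{\hat\pi^*(y\mid x)}{\pref(y\mid x)}$ yields
\begin{equation*}
\widehat D_K \;\xrightarrow[K\to\infty]{\text{a.s.}}\; \expect_{\pref}\!\left[\frac{\hat\pi^*(y\mid x)}{\pref(y\mid x)}\, f\!\left(\frac{\hat\pi_\theta(y\mid x)}{\hat\pi^*(y\mid x)}\right)\right] = \FDiv\big(\hat\pi_\theta(\cdot\mid x)\,\|\,\hat\pi^*(\cdot\mid x)\big).
\end{equation*}

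Finally I would promote this pointwise (in $x$) almost-sure convergence to convergence of the objective itself. Since $\gL_\textnormal{\methodabbrv}(\pi_\theta)=\expect_{x\sim\gD}\expect_{y_{1:K}\sim\pref}[\widehat D_K]$ and $\widehat D_K$ converges a.s.\ to a deterministic limit, the bounded/dominated convergence theorem gives $\expect_{y_{1:K}}[\widehat D_K]\to\FDiv(\hat\pi_\theta(\cdot\mid x)\,\|\,\hat\pi^*(\cdot\mid x))$ for each $x$; a second application over $x\sim\gD$ (a finite sum when $\gD$ is a finite dataset) then gives $\gL_\textnormal{\methodabbrv}(\pi_\theta)\to\expect_{x\sim\gD}\FDiv(\hat\pi_\theta(\cdot\mid x)\,\|\,\hat\pi^*(\cdot\mid x))$, which is the claim.

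I expect the main obstacle to be the two interchanges of limit and expectation: controlling the $K$-dependent argument of $f$ inside the outer Monte-Carlo average, and securing a dominating envelope so that $\expect_{y_{1:K}}[\widehat D_K]\to\FDiv$. Making these rigorous needs mild regularity conditions left implicit in the statement — e.g.\ $f$ continuous and of at most polynomial growth on $(0,\infty)$, the log-ratio $g_\theta(x,\cdot)-r(x,\cdot)$ bounded (or $e^{g_\theta},e^{r}$ with enough moments under $\pref$), and $\hat\pi_\theta/\hat\pi^*$ bounded away from $0$ and $\infty$ — which together yield a uniformly integrable envelope for $\widehat D_K$ and let a Slutsky-type argument absorb the normalizer perturbation. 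Everything else is routine bookkeeping with self-normalized importance weights.
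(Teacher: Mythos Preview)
Your proposal is correct and follows essentially the same route as the paper: replace the two self-normalizing sums $\frac1K\sum_j e^{g_\theta(x,y_j)}$ and $\frac1K\sum_j e^{r(x,y_j)}$ by their limits via the law of large numbers, identify the resulting ratio inside $f$ with $\hat\pi_\theta/\hat\pi^*$, and recognize the remaining average as $\expect_{\hat\pi^*}[f(\hat\pi_\theta/\hat\pi^*)]$. The paper carries this out as a purely formal substitution (no mention of dominated convergence, regularity of $f$, or the perturbation from the $K$-dependent normalizer inside $f$), so your version is in fact more careful than theirs.
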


\begin{proof}
    We first consider following equation$
    \sum_{j=1}^K e^{g_\theta\left(x, y_j\right)} / K = \mathbb{E}_{\pref(y \mid x)} e^{g_\theta\left(x, y\right)}$.
When $K \to \infty$, the above equation holds because the LHS is the unbiased Monte Carlo estimation of the expectation. Note that $g_\theta(x, y)=\log \hat{\pi}_\theta(y \mid x)-\log \pi_{\mathrm{ref}}(y \mid x)= \log \frac{\hat{\pi}_\theta(y \mid x)}{\pi_{\mathrm{ref}}(y \mid x)}$. 
Then we have the fact that:
\begin{equation}
    \sum_{j=1}^K e^{g_\theta\left(x, y_j\right)} = K \mathbb{E}_{\pref(y \mid x)}\frac{\hat{\pi}_\theta(y \mid x)}{\pi_{\mathrm{ref}}(y \mid x)} = K.
\label{eq:mc_g}
\end{equation}
Similarly, we obtain an expectation formulation of $\sum_{j=1}^K e^{r\left(x, y_j\right)}$ as:
\begin{equation}
    \sum_{j=1}^K e^{r\left(x, y_j\right)} = K \mathbb{E}_{\pref(y \mid x)} e^{r(x, y)} = K Z_r(x).
\label{eq:mc_e}
\end{equation}
Here $Z_r(x)$ is a partition function with respect to $r$ and $x$. 
Putting the above Eq.~\ref{eq:mc_e} and Eq.~\ref{eq:mc_g} to the Eq.~\ref{eq:loss_fpo_general}, we obtain
\begin{align}
  \gL_\textnormal{\methodabbrv} (\pi_\theta) &=\expect_{x \sim \gD} \expect_{y_{1:K} \sim \pref} \Bigg[ \quad \frac{e^{r(x, y_{i})}}{\sum_{j=1}^K e^{r(x, y_{j})}} f\left(\frac{e^{g_\theta(x, y_{i})}}{\sum_{j=1}^K e^{g_\theta(x, y_{j})}} \Big/ \frac{e^{r(x, y_{i})}}{\sum_{j=1}^K e^{r(x, y_{j})}} \right)\Bigg], \nonumber \\
  &= \expect_{x \sim \gD} \Bigg[ \expect_{y_{1:K} \sim \pref} \quad \frac{e^{r(x, y_{i})}}{K Z_r(x)} f\left(\frac{\frac{\hat{\pi}_\theta(y_i \mid x)}{\pi_{\mathrm{ref}}(y_i \mid x)}}{K} \Big/ \frac{e^{r(x, y_{i})}}{ K Z_r(x)} \right)\Bigg], \nonumber \\
&=\expect_{x \sim \gD} \Bigg[ \frac{1}{K} \sum_{i=1}^{K} \frac{\pref(y_i\mid x)e^{r(x, y_{i})}}{Z_r(x)} f(\hat{\pi}_\theta(y_i \mid x)\Big/ \frac{\pref(y_i\mid x)e^{r(x, y_{i})}}{Z_r(x)})\Bigg] .
\end{align}
Recalling that $\frac{\pref(y_i \mid x)e^{r(x, y_{i})}}{Z_r(x)} = \hat{\pi}^*(y \mid x)$ given the definition, we have that
\begin{align}
    \gL_\textnormal{\methodabbrv} (\pi_\theta)  =  \expect_{x \sim \gD}  \Bigg[\expect_{\hat{\pi}^*(y \mid x)} f(\frac{\hat{\pi}_\theta(y_i \mid x)}{\hat{\pi}^*(y \mid x)} )\Bigg] = \mathbb{E}_{x \sim \mathcal{D}} \mathbb{D}_f\left(\hat{\pi}_\theta(y \mid x) \| \hat{\pi}^*(y \mid x)\right),
\end{align}
which finishes the proof. 
\end{proof}

\subsection{Derivation for EXO as \methodabbrv with Forward KL}

In this section, we provide a brief derivation on how EXO~\citep{ji2024towards} corresponds to minimizing the forward KL divergence $\KLDiv(\pi_\theta\|\pi^*)$.
\begin{proof}
By realizing the function $f$ in \Cref{eq:fpo} as the generator function of forward KL, \textit{i.e.}, $u \log u$, we have,
\begin{equation}
\begin{aligned}
\small
    \gL_\textnormal{\methodabbrv-RKL}(\pi_\theta) = & \expect_{x \sim \gD} \expect_{\{ y_w, y_l\} \sim \pref} \Bigg[ 
     (1-\epsilon) \cdot \frac{ \sigmoid(g_\theta(x, y_w)-g_\theta(x, y_l)) }{1-\epsilon} \log \left(\frac{ \sigmoid(g_\theta(x, y_w)-g_\theta(x, y_l)) }{1-\epsilon}\right) \\
    & \quad + \epsilon \cdot \frac{ \sigmoid(g_\theta(x, y_l)-g_\theta(x, y_w)) }{\epsilon} \log \left(\frac{ \sigmoid(g_\theta(x, y_l)-g_\theta(x, y_w)) }{\epsilon}\right)\Bigg], \\
    = & \expect_{x \sim \gD} \expect_{\{y_w, y_l\} \sim \pref} \Bigg[ 
     { \sigmoid(g_\theta(x, y_w)-g_\theta(x, y_l)) } \log \left(\frac{ \sigmoid(g_\theta(x, y_w)-g_\theta(x, y_l)) }{1-\epsilon}\right) \\
    & \quad + { \sigmoid(g_\theta(x, y_l)-g_\theta(x, y_w)) } \log \left(\frac{ \sigmoid(g_\theta(x, y_l)-g_\theta(x, y_w)) }{\epsilon}\right)\Bigg], \\
\end{aligned}
\end{equation}
which recovers the original EXO objective Eq. (23) in \citet{ji2024towards}.
\end{proof}

  

\section{More Experiment Details}
\label{app:sec:experiment_details}

This section provides comprehensive details on the experimental setup for our studies presented in \Cref{exp:function_f_ablation} and \Cref{exp:simpo_table}. We adhere to the training and evaluation protocols outlined in EXO \citep{ji2024towards} and SimPO \citep{meng2024simpo} for our method, while utilizing reported hyperparameters and results for baseline comparisons. The following subsections elaborate on the specific hyperparameters, training procedures, and evaluation methodologies employed in our experiments.

\subsection{Experiment Details in \Cref{exp:function_f_ablation}}

In our experiments, we follow the training and evaluation hyperparameters outlined in EXO \citep{ji2024towards}, while for all baselines, we utilized their reported hyperparameters and results for comparison. Below, we detail the key hyperparameters employed in \cite{ji2024towards}, as well as those used for our method. 

\textbf{Hyperparameters.}
For $\text{\methodabbrv}_{\text{pref}}$, we set $\beta_r = 0.5$, $\beta_{\pi} = 0.5$, and $\alpha = 0.1$. For $\text{\methodabbrv}_{\text{rw}}$, we use $\beta_r = 0.1$, $\beta_{\pi} = 0.35$, $\alpha = 0.1$, and $K = 4$. The label smoothing hyperparameter $\epsilon$ in $\text{\methodabbrv}_{\text{pref}}$ is set to 1e-3. Both methods employ the Adam optimizer with a learning rate of 1e-6 and a batch size of 64, training for one epoch on each dataset, which is more than sufficient for convergence.

\textbf{Evaluation Details.} We sample 4 completions from the learned policy for each of 512 prompts from the test set across all datasets. For consistency, we maintain a sampling temperature of $t = 0.8$ during both training and inference. To calculate the win rate using the reward model, we compare all pairs of completions between the learned policy and base completions (either from the SFT policy or the chosen completion in the dataset).

For GPT-4 evaluations, we sample 100 prompts with 1 completion per prompt for each policy. To mitigate position bias, we evaluate each pair of generations twice, swapping their order. We use the concise prompt from \citep{rafailov2023direct} for summary quality assessment and a modified prompt for evaluating helpfulness in multi-turn dialogues. The evaluation prompts are detailed in \Cref{app:sec:prompt_details}.

\subsection{Experiment Details in \Cref{exp:simpo_table}}

Hyperparameter tuning is critical for maximizing the performance of preference optimization methods. In our experiments, we adhered to the general training hyperparameters outlined in SimPO \citep{meng2024simpo}, while for all baselines, we utilized their reported hyperparameters and results for comparison. Below, we detail the setup and key hyperparameters employed in \citet{meng2024simpo}, as well as those used for our method, \methodabbrv.

\textbf{Setup.}
In Base setup, models were fine-tuned on the \href{https://huggingface.co/datasets/HuggingFaceH4/ultrachat_200k}{UltraChat-200k} corpus~\citep{Ding2023EnhancingCL} with a learning rate of 2e-5, batches of 128 samples, and sequence length capped at 2048 tokens. The learning schedule followed a cosine curve with a 10\% warm-up phase over a single epoch. The Adam optimizer~\citep{kingma2014adam} was used for all model training. For the preference optimization dataset, the 2048 token limit was maintained and the same cosine learning rate schedule and warm-up strategy were used.  For Instruct setting, we use specifically constructed preference datasets
\href{https://huggingface.co/datasets/princeton-nlp/llama3-ultrafeedback-armorm}{princeton-nlp/llama3-ultrafeedback-armorm}
for Llama-3 
and \href{https://huggingface.co/datasets/princeton-nlp/mistral-instruct-ultrafeedback}{princeton-nlp/mistral-instruct-ultrafeedback} for Mistral.
These datasets are created using prompts from UltraFeedback, with chosen and rejected response pairs $(y_w, y_l)$ regenerated using the SFT models and annotated by reward models, to approximate an on-policy setting for the Instruct models. 

\textbf{Hyperparameters.}
 We used the following hyperparameters for each experimental scenario: For Mistral-Base, we set $\alpha$=0.99, $\beta$=2.0, $\gamma$=1.6, with a learning rate of 3e-7. For Mistral-Instruct, we used $\alpha$=0.925, $\beta$=2.5, $\gamma$=0.3, and a learning rate of 6e-7. In the Llama3-Base setting, we employed $\alpha$=0.99, $\beta$=2.0, $\gamma$=1.0, with a learning rate of 7e-7. Finally, for Llama3-Instruct, we set $\alpha$=0.99, $\beta$=2.6, $\gamma$=1.43, and used a learning rate of 1e-6.

\textbf{Evaluation Details.}
For AlpacaEval 2, we generated responses using a sampling strategy. We set the temperature to 0.7 for Mistral-Base (in line with \texttt{zephyr-7b-beta}),\footnote{\url{https://github.com/tatsu-lab/alpaca_eval/blob/main/src/alpaca_eval/models_configs/zephyr-7b-beta/configs.yaml}} 0.5 for Mistral-Instruct (following \texttt{Snorkel-Mistral-PairRM-DPO}), and 0.9 for both Llama3 variants. Arena-Hard saw us employ greedy decoding across the board. For MT-Bench, we adhered to the official guidelines, which prescribe different sampling temperatures for various categories.

\subsection{Computing Resources}

In this work, we ran all our training experiments on four 80GB A100 GPUs, and all inference process using one 80GB A100 GPUs for each model.



\section{Prompt Details}
\label{app:sec:prompt_details}
In this section, we provide the specific prompts used in our model evaluation, as described in \Cref{exp:function_f_ablation,exp:simpo_table}. 

\subsection{Dialogue Generation Evaluation Prompt}

For evaluating the helpfulness of generated dialogues, particularly in multi-turn settings, we use the following prompt structure:

\begin{table}[!h]
\centering
\caption{Prompt for GPT-4 evaluation on the Anthropic HH dataset. Texts in blue are placeholders to be substituted by the real data.}
\resizebox{\columnwidth}{!}{%
\fbox{
\begin{minipage}{\textwidth}
\small
\textbf{For the following dialogue history to a chatbot, which response is more helpful?}\\
\textbf{Dialogue history:} \textcolor{blue}{\textless dialogue history\textgreater}\\
\textbf{Response A:} \textcolor{blue}{\textless Response A\textgreater}\\
\textbf{Response B:} \textcolor{blue}{\textless Response B\textgreater}\\
\textbf{FIRST}, provide a one-sentence comparison of the two responses and explain which you feel is more helpful.\\
\textbf{SECOND}, on a new line, state only \textbf{"A"} or \textbf{"B"} to indicate which response is more helpful. Your response should use the format:\\
\textbf{Comparison:} \textcolor{blue}{\textless one-sentence comparison and explanation\textgreater}\\
\textbf{More helpful:} \textcolor{blue}{\textless "A" or "B"\textgreater}
\end{minipage}
}}
\end{table}

\subsection{Summarization Evaluation Prompt}

For assessing the quality of summaries, we employ the concise prompt structure from \citet{rafailov2023direct}:

\begin{table}[!h]
\centering
\caption{Prompt for GPT-4 evaluation on the TL;DR  dataset. Texts in blue are placeholders to be substituted by real data.}
\resizebox{\columnwidth}{!}{%
\fbox{
\begin{minipage}{\textwidth}
\small
\textbf{Which of the following summaries does a better job of summarizing the most important points in the given forum post, without including unimportant or irrelevant details? A good summary is both precise and concise.}\\
\textbf{Post:} \textcolor{blue}{\textless post\textgreater}\\
\textbf{Summary A:} \textcolor{blue}{\textless Summary A\textgreater}\\
\textbf{Summary B:} \textcolor{blue}{\textless Summary B\textgreater}\\
\textbf{FIRST}, provide a one-sentence comparison of the two summaries, explaining which you prefer and why.\\
\textbf{SECOND}, on a new line, state only \textbf{"A"} or \textbf{"B"} to indicate your choice. Your response should use the format:\\
\textbf{Comparison:} \textcolor{blue}{\textless one-sentence comparison and explanation\textgreater}\\
\textbf{Preferred:} \textcolor{blue}{\textless "A" or "B"\textgreater}
\end{minipage}
}}
\end{table}

\end{document}